\newtheorem{thm}{Theorem}
\newtheorem{lem}[thm]{Lemma}
\newtheorem{remark}{Remark}
\title{\LARGE \bf
Robocentric Visual-Inertial Odometry
}
\author{Zheng Huai and Guoquan Huang
\thanks{The authors are with the Dept. of Mechanical Engineering,
        University of Delaware, Newark, DE 19716, USA
        {\tt\small \{zhuai|ghuang\}@udel.edu}.}%
}
\begin{document}

\maketitle

\begin{abstract}

In this paper, we propose a novel {\em robocentric} formulation of the visual-inertial navigation system (VINS)  within a sliding-window filtering framework and design an efficient, lightweight, {\em robocentric visual-inertial odometry} (R-VIO) algorithm for consistent motion tracking even in challenging environments using only a monocular camera and a 6-axis IMU. The key idea is to deliberately reformulate the VINS with respect to a moving local frame, rather than a fixed global frame of reference as in the standard world-centric VINS, in order to obtain relative motion estimates of higher accuracy for updating global poses. As an immediate advantage of this robocentric formulation, the proposed R-VIO can start from an arbitrary pose, {\em without} the need to align the initial orientation with the global gravitational direction. More importantly, we analytically show that the  linearized robocentric VINS does {\em not} undergo the observability mismatch issue as in the standard world-centric counterpart which was identified in the literature as the main cause of estimation inconsistency. Additionally, we investigate in-depth the special motions that degrade the performance in the world-centric formulation and show that such degenerate cases can be easily compensated in the proposed robocentric formulation, without resorting to additional sensors as in the world-centric formulation, thus leading to better robustness. The proposed R-VIO algorithm has been extensively tested through both Monte Carlo simulations and real-world experiments with different sensor platforms navigating in different environments, and shown to achieve better (or competitive at least) performance than the state-of-the-art VINS, in terms of consistency, accuracy and efficiency.

\end{abstract}

\section{Introduction}

Enabling high-precision, energy-efficient, and robust motion tracking in 3D on mobile devices and robots with minimal sensing 
holds potentially huge implications in many practical applications, ranging from mobile augmented reality to autonomous driving. 
To this end, inertial navigation offers a classical 3D localization solution which utilizes an inertial measurement unit (IMU) measuring the 3 degree-of-freedom (DOF) angular velocity and 3 DOF linear acceleration of the sensor platform on which it is rigidly attached. Typically, IMU works with a high frequency (e.g., 100Hz$\sim$1000Hz) that enables it to sense highly dynamic motion, while due to the corrupting sensor noise and bias, purely integrating IMU measurements may easily result in unusable motion estimates. 
This necessitates to utilize the aiding information from {\em at least} a single camera to reduce the accumulated inertial navigation drifts, which comes into the well-known
visual-inertial navigation system (VINS).

Over the past decade, significant progresses have been witnessed on the research and application of VINS, including the visual-inertial simultaneous localization and mapping (VI-SLAM) and the visual-inertial odometry (VIO), and many different VINS algorithms have been proposed (e.g., \cite{mourikis2007multi,jones2011visual,kelly2011visual,li2013high,leutenegger2015keyframe,shen2015tightly,usenko2016direct,mur2017visual} and references therein).
However, almost all these algorithms are based on the standard {\em world-centric} formulation -- that is, to estimate the absolute motion with respect to a fixed global frame of reference, such as the earth-centered earth-fixed (ECEF) or the north-east-down (NED) frame. In order to achieve accurate localization, such world-centric VINS algorithms usually require a particular initialization procedure to estimate the starting pose in the fixed global frame of reference, which, however, is hard to guarantee the accuracy in some cases (e.g., quick start, big sensor latency, or no/poor vision).
While the extended Kalman filter (EKF)-based world-centric VINS algorithms have the advantage of lower computational cost~\cite{mourikis2007multi,li2013high}
in comparing to the optimization-based iterative approaches (in which relinearization  incurs higher computation~\cite{leutenegger2015keyframe,shen2015tightly}), it may become {\em inconsistent},  primarily due to the fact that the EKF linearized systems have different observability properties from the corresponding underlying nonlinear systems~\cite{huang2010observability,martinelli2012vision,li2013high}. 
To address this issue, the remedies include enforcing the correct observabilty constraint~\cite{li2013high,hesch2014consistency,huang2014towards} 
or employing an invariant error representation~\cite{zhang2017convergence}.
%
%
However, one may ask: {\em Do we have to formulate VINS in the world-centric form?} The answer is {\em no}. 
Intuitively, considering how we navigate -- we might not remember the starting pose after traveling a long distance while knowing well the relative motion within a recent, short time interval; thus we may relax the fixed global frame of the VINS, instead, choosing a moving local frame as reference to better estimate relative motion which can be used for global pose update.

Notice that the usage of sensor-centered formulation for robot localization can be traced back to the 2D laser-based robocentric mapping~\cite{castellanos2004limits},
where the global frame is treated as a ``feature'' being observed from the moving robot frame 
and the odometry measurements are fused with the laser observations via EKF to estimate the relative motion, which is then used to update the global pose and shift the local frame of reference through a composition step when moving onto the next time step.
%
With a similar idea, \cite{civera20091p} used a camera-centered formulation to illustrate the potential of fusing visual information with the proprioceptive information, such as the angular and linear velocity measurements.
Both methods have been applied to the EKF-based SLAM while performing mapping with respect to a local frame, in this way the global uncertainty is properly limited thus improving the estimation consistency. 
It should also be noted that an EKF-based VINS algorithm with a different robocentric formulation and sensor-fusion scheme was recently introduced by~\cite{bloesch2015robust,bloesch2017iterated}.
Especially, its state vector includes the current IMU states, the observed features, as well as the sensor spatial calibration parameters, which are all expressed with respect to the current IMU frame; while the visual and inertial measurements are fused in a {\em direct} fashion. Moreover, in contrast to~\cite{castellanos2004limits,civera20091p}, this method directly estimates the absolute motion between the global frame and the local frame, and thus a standard {iterated} EKF is employed without the composition step used to shift the local frame of reference.

In this paper, 
we introduce a new  robocentric formulation of VINS  with respect to a local IMU frame of reference. Specifically, in contrast to~\cite{castellanos2004limits,civera20091p,bloesch2015robust,bloesch2017iterated} which keep the features in the state vector and would inevitably face the issue of ever-increasing computational cost as more features are observed and included, we focus on a sliding-window EKF-based robocentric VIO, akin to the multi-state constraint Kalman filter (MSCKF)~\cite{mourikis2007multi}.
In the proposed filter, the stochastic cloning~\cite{roumeliotis2002icra} is used for processing hundreds of features while only keeping a small number of relative robot poses (from which the features are observed) in the state vector, hence significantly reducing the computational cost. 
More importantly, the proposed robocentric system does not suffer from the observability mismatch issue as in the world-centric counterpart, thus having better consistency. 
In particular, the main contributions of the paper are summarized as follows:
\begin{itemize}

    \item We propose a novel robocentric VINS formulation  by reformulating the system with respect to a local IMU frame, 
    where both the global frame  treated as  the only ``feature'' 
    and the local gravity (i.e., with respect to the local frame of reference) are  included in the state vector. The local frame of reference is shifted at every image time through a {\em composition} step, 
   and the relative pose estimate between two consecutive local frames is used for updating the global pose estimate.
    
    \item We develop an efficient and robust R-VIO algorithm within a sliding-window filtering framework, 
    where a constant-size window of relative poses, instead of the observed features or the global poses, are included in the filter's state vector
    and are estimated by tightly fusing the camera and IMU measurements in a local frame of reference.
    %
    As such, a tailored {\em inverse depth}-based measurement model is developed to fully utilize such state configuration, where a dense connection is established between the feature measurements and the state considering the geometry between the feature and the poses from which it has been observed. 
    It should be pointed out that even if motionless, this model can still fuse the bearing information  from the distant features, which is particularly useful in reality.
    
    \item We study in-depth the observability properties of the proposed R-VIO, and analytically show that it has {\em constant} unobservable subspace, i.e., independent of the EKF linearization points, under generic motions. Thus, the resulting EKF-based robocentric VINS does not experience the observability mismatch that was identified as the main cause of estimation inconsistency \cite{huang2010observability,li2013high,hesch2014consistency}. More importantly, the proposed R-VIO system not only has correct unobservable dimensions, but also the desired unobservable directions. 
    Furthermore, we investigate the unobservable directions under degenerate motions, such as planar motion, and show that the possible performance degradation occurred in the world-centric formulation can be easily mitigated by the R-VIO {\em without} using the information of any additional sensor. 

    \item We perform extensive tests on both the Monte Carlo simulations and the real-world experiments that are running on different sensor platforms from the micro aerial vehicle (MAV) flying indoor to ground vehicle driving in dynamic traffic scenarios. All the real-time results  thoroughly validate the superior performance of the proposed R-VIO algorithm.
\end{itemize}


\section{Related work}
\label{sec:rw}

As mentioned earlier, the VINS algorithms generally include the VI-SLAM~\cite{lupton2012visual,leutenegger2015keyframe,shen2015tightly} and the VIO~\cite{mourikis2007multi,li2013high,forster2015manifold}. 
The former jointly estimates the feature positions and the camera/IMU pose that together form the state vector, whereas the latter does not include the features in the state but still utilizes the visual measurements to impose motion constraints between the camera/IMU poses. 
In general, by performing mapping, the VI-SLAM gains the better accuracy from the feature map and the possible loop closures while incurring higher computational complexity than the VIO, 
although different methods have been proposed to address this  issue (e.g., \cite{shen2015tightly,leutenegger2015keyframe,usenko2016direct,mur2017visual}). 
While there were also efforts to integrate VIO and SLAM~\cite{mourikis2009vision,li2013optimization},
in this paper we focus on the design of lightweight VIO that can serve as an essential building block for large-scale navigation systems.

There are different schemes available for VINS to fuse the visual and inertial measurements 
which can be broadly categorized into the {loosely-coupled} and the {tightly-coupled}. 
The former processes the visual and inertial measurements separately to infer their own motion constraints which are fused later (e.g., \cite{weiss2011real,kneip2011robust,indelman2013information}). 
Although this method is computationally efficient, the decoupling of visual and inertial constraints results in information loss. 
By contrast, the tightly-coupled approach directly fuses the visual and inertial measurements within a single process and  achieves  higher accuracy (e.g., \cite{mourikis2007multi,li2013high,forster2015manifold,shen2015tightly,leutenegger2015keyframe}). 
As the embedded computing and sensing technologies advance, the tightly-coupled VINS can now run in real time even on the resource-constrained sensor platforms such as MAVs and phones, thus becoming the methodological focus of this paper.

In particular, there are two main approaches for tightly-coupled state estimation, i.e., the optimization-based and the EKF-based. Typically, bundle adjustment (BA)~\cite{triggs1999bundle} is employed by the former that is to estimate all the states involved in all of the available measurements by solving a nonlinear least-squares problem (e.g., \cite{forster2015manifold,leutenegger2015keyframe}). As the relinearization of nonlinear measurement models is carried out at each iteration, this  would incur higher computational cost as compared to the EKF-based methods (e.g., \cite{mourikis2007multi,li2013high}). 
However, as what was mentioned before, the standard EKF-based VINS suffers from the estimation inconsistency primarily caused by the observability mismatch due to EKF linearization (e.g., \cite{huang2010observability,hesch2014consistency}). 
Recently, \cite{bloesch2015robust,bloesch2017iterated} introduced an EKF-based VINS solution using a robocentric formulation, 
which, however, follows the VI-SLAM framework and employs the iterated EKF update in a direct fashion.
In contrast to that, inspired by the robocentric mapping that improves the EKF consistency in the 2D SLAM~\cite{castellanos2004limits},
in this paper we propose a robocentric formulation within the sliding window filter-based VIO framework
and perform the observability analysis of the EKF-based robocentric VINS to theoretically support the consistency improvement of the proposed R-VIO algorithm.

\section{Estimator design}
\label{sec:rvio}

Consider a mobile platform equipped with an IMU and a single camera navigating in 3D environments. In contrast to the standard world-centric VINS using a fixed global frame of reference, $\{G\}$, in the proposed robocentric formulation, the frame $\{I\}$ affixed to IMU is set to be the immediate, local frame of reference for navigation, termed $\{R\}$. As a result, the global frame $\{G\}$ (or the first local frame of reference, $\{R_0\}$) turns into a ``moving" feature from the perspective of $\{R\}$; and during navigation, $\{R\}$ is transformed from one IMU frame to another. In this section, we deliberately reformulate the VINS problem with respect to such a moving local, rather than a fixed global, frame of reference, and present in detail the proposed R-VIO algorithm within a sliding-window filtering framework.

\subsection{State vector}

The state vector of the proposed robocentric VINS consists of two parts: (i) the global state that maintains the motion information of the starting frame $\{G\}$ (i.e., $\{R_0\}$), and (ii) the IMU state that characterizes the motion from the local frame of reference to the current IMU frame. In particular, at time-step $\tau\in[t_k,t_{k+1}]$ the state expressed in the local frame of reference, $\{R_k\}$, is given by:\footnote{Throughout this paper, $k,k+1,\ldots$ indicate the image time-steps, while $\tau,\tau+1,\ldots$ are the IMU time-steps between every two consecutive images. $\{I\}$ and $\{C\}$ denote the IMU frame and camera frame, respectively, $\{R\}$ is the robocentric frame of reference which is selected with the corresponding IMU frame at every image time-step. The subscript $\ell |i$ refers to the estimate of a quantity at time-step $\ell$, after all measurements up to time-step $i$ have been processed. $\hat{x}$ is used to denote the estimate of a random variable $x$, while $\tilde{x}=x-\hat{x}$ is the additive error in this estimate. $\mathbf{I}_n$ and $\mathbf{0}_n$ are the $n\times n$ identity and zero matrices, respectively. Finally, the left superscript denotes the frame of reference with respect to which the vector is expressed.}
\begin{equation}
    \begin{split}
    &{^{R_k}}\mathbf{x}_\tau =
    \begin{bmatrix}
    {^{R_k}}\mathbf{x}_G^\top & {^{R_k}}\mathbf{x}_{I_\tau}^\top
    \end{bmatrix}^\top, \\
    &{^{R_k}}\mathbf{x}_G =
    \begin{bmatrix}
    {^k_G}\bar{q}^\top & {^{R_k}}\mathbf{p}_G^\top & {^{R_k}}\mathbf{g}^\top
    \end{bmatrix}^\top, \\
    &{^{R_k}}\mathbf{x}_{I_\tau} =
    \begin{bmatrix}
    {^\tau_k}\bar{q}^\top & {^{R_k}}\mathbf{p}_{I_\tau}^\top & \mathbf{v}_{I_\tau}^\top & \mathbf{b}_{g_\tau}^\top & \mathbf{b}_{a_\tau}^\top
    \end{bmatrix}^\top
    \end{split}
    \label{eq:x}
\end{equation}
where ${^k_G}\bar{q}$ is the $4\times1$ unit quaternion~\cite{breckenridge1979jplq} describing the rotation from $\{G\}$ to $\{R_k\}$, ${^{R_k}}\mathbf{p}_G$ is the position of $\{G\}$ in $\{R_k\}$, ${^\tau_k}\bar{q}$ and ${^{R_k}}\mathbf{p}_{I_\tau}$ are the relative rotation and translation from $\{R_k\}$ to the current IMU frame, $\{I_\tau\}$, $\mathbf{v}_{I_\tau}$ is the local velocity expressed in $\{I_\tau\}$, and $\mathbf{b}_{g_\tau}$ and $\mathbf{b}_{a_\tau}$ denote the IMU's gyroscope and accelerometer biases, respectively. It is important to note that the local gravity, ${^{R_k}}{\mathbf{g}}$, is also included in the state vector. The corresponding error state is then given by:
\begin{equation}
    \begin{split}
    &{^{R_k}}\tilde{\mathbf{x}}_\tau =
    \begin{bmatrix}
    {^{R_k}}\tilde{\mathbf{x}}_G^\top & {^{R_k}}\tilde{\mathbf{x}}_{I_\tau}^\top
    \end{bmatrix}^\top, \\
    &{^{R_k}}\tilde{\mathbf{x}}_G =
    \begin{bmatrix}
    \delta{\boldsymbol{\theta}}_G^\top & {^{R_k}}\tilde{\mathbf{p}}_G^\top & {^{R_k}}\tilde{\mathbf{g}}^\top
    \end{bmatrix}^\top, \\
    &{^{R_k}}\tilde{\mathbf{x}}_{I_\tau} =
    \begin{bmatrix}
    \delta{\boldsymbol{\theta}}_\tau^\top & {^{R_k}}\tilde{\mathbf{p}}_{I_\tau}^\top & \tilde{\mathbf{v}}_{I_\tau}^\top & \tilde{\mathbf{b}}_{g_\tau}^\top & \tilde{\mathbf{b}}_{a_\tau}^\top
    \end{bmatrix}^\top
    \end{split}
    \label{eq:dx}
\end{equation}
In particular, the error quaternion is defined by $\bar{q}=\delta{\bar{q}}\otimes\hat{\bar{q}}$:
\begin{equation}
    \delta{\bar{q}} \simeq
    \begin{bmatrix}
    \frac{1}{2}\delta{\boldsymbol{\theta}}^\top & 1
    \end{bmatrix}^\top, \quad
    \mathbf{C}(\delta{\bar{q}}) = \mathbf{I}_3-\lfloor\delta{\boldsymbol{\theta}}\times\rfloor
    \label{eq:eq}
\end{equation}
where $\otimes$ denotes the quternion multiplication,  $\delta{\bar{q}}$ is the error quaternion associated with the 3DOF error angle $\delta{\boldsymbol{\theta}}$, $\mathbf{C}(\cdot)$ denotes a $3\times 3$ rotation matrix, and $\lfloor\cdot\times\rfloor$ is the skew-symmetric operator~\cite{trawny2005indirect}.

At time-step $k$ when the corresponding IMU frame, $\{I_k\}$, becomes the frame of reference (i.e., $\{R_k\}$) of estimation, a window of the relative poses between the last $N$ robocentric frames of reference is included in the state vector, as:
\begin{equation}
    \begin{split}
    &\hat{\mathbf{x}}_k =
    \begin{bmatrix}
    {^{R_k}}\hat{\mathbf{x}}_k^\top & \hat{\mathbf{w}}_k^\top
    \end{bmatrix}^\top,  \\
    &\hat{\mathbf{w}}_k =
    \begin{bmatrix}
    {^2_1}\hat{\bar{q}}^\top & {^{R_1}}\hat{\mathbf{p}}^\top_{R_2} & \ldots & {^N_{N-1}}\hat{\bar{q}}^\top & {^{R_{N-1}}}\hat{\mathbf{p}}^\top_{R_N}
    \end{bmatrix}^\top
    \end{split}
    \label{eq:xekf}
\end{equation}
where ${^i_{i-1}}\hat{\bar{q}}$ and ${^{R_{i-1}}}\hat{\mathbf{p}}_{R_i}$ express the relative rotation and translation from $\{R_{i-1}\}$ to $\{R_i\}$, $i=2,\ldots,N$. 
To keep the state vector of constant size over time, 
we manage it in the sliding-window fashion, i.e., marginalizing the oldest one when a new relative pose is included in the window. Accordingly, the augmented error state is given by:
\begin{equation}
    \begin{split}
    &\tilde{\mathbf{x}}_k =
    \begin{bmatrix}
    {^{R_k}}\tilde{\mathbf{x}}_k^\top & \tilde{\mathbf{w}}_k^\top
    \end{bmatrix}^\top,  \\
    &\tilde{\mathbf{w}}_k =
    \begin{bmatrix}
    \delta{\boldsymbol{\theta}}_2^\top & {^{R_1}}\tilde{\mathbf{p}}^\top_{R_2} & \ldots & \delta{\boldsymbol{\theta}}_N^\top & {^{R_{N-1}}}\tilde{\mathbf{p}}^\top_{R_N}
    \end{bmatrix}^\top
    \end{split}
    \label{eq:exekf}
\end{equation}

\subsection{Propagation}

We first present the motion model for the robocentric state, ${^{R_k}}{\mathbf{x}}_\tau$ (see \eqref{eq:x}), then extend it to the augmented state, ${\mathbf{x}}_\tau$ (see \eqref{eq:xekf}). Note that during the time interval $[t_k,t_{k+1}]$ the global frame is static with respect to the local frame of reference, $\{R_k\}$, i.e., ${^{R_k}}\dot{\tilde{\mathbf{x}}}_G = \mathbf{0}_{9\times1}$. For the IMU state, we introduce a locally-parameterized kinematic model: 
\begin{equation}
    \begin{split}
    &{^\tau_k}\dot{\bar{q}} = \frac{1}{2}\boldsymbol{\Omega}(\boldsymbol{\omega}){^\tau_k}\bar{q}, \;\; {^{R_k}}\dot{\mathbf{p}}_{I_\tau} = \mathbf{C}({^\tau_k}\bar{q})^\top\mathbf{v}_{I_\tau}, \\
    &\dot{\mathbf{v}}_{I_\tau} = {^\tau}\mathbf{a}-\lfloor\boldsymbol{\omega}\times\rfloor\mathbf{v}_{I_\tau}, \;\; \dot{\mathbf{b}}_g=\mathbf{n}_{wg}, \;\; \dot{\mathbf{b}}_a=\mathbf{n}_{wa}
    \end{split}
    \label{eq:xdot}
\end{equation}
where $\mathbf{n}_{wg}\sim\mathcal{N}(\mathbf{0},\sigma_{wg}^2\mathbf{I}_3)$ and $\mathbf{n}_{wa}\sim\mathcal{N}(\mathbf{0},\sigma_{wa}^2\mathbf{I}_3)$ are the zero-mean white Gaussian noise that drive the IMU biases,  and $\boldsymbol{\omega}$ and ${^\tau}\mathbf{a}$ are the angular velocity and linear acceleration expressed in $\{I_\tau\}$, respectively. 
And for $\boldsymbol{\omega}=[\omega_x,\omega_y,\omega_z]^\top$, we have:
\begin{equation}
    \boldsymbol{\Omega}\left(\boldsymbol{\omega}\right) =
    \begin{bmatrix}
    -\lfloor\boldsymbol{\omega}\times\rfloor & \boldsymbol{\omega} \\
    -\boldsymbol{\omega}^\top & 1
    \end{bmatrix}, \;
    \lfloor\boldsymbol{\omega}\times\rfloor =
    \begin{bmatrix}
    0 & -\omega_z & \omega_y \\
    \omega_z & 0 & -\omega_x \\
    -\omega_y & \omega_x & 0
    \end{bmatrix} \nonumber
\end{equation}

Typically, IMU provides the gyroscope and accelerometer measurements, $\boldsymbol{\omega}_m$ and $\mathbf{a}_m$, expressed in the IMU frame:
\begin{align}
    \boldsymbol{\omega}_m &= \boldsymbol{\omega}+\mathbf{b}_g+\mathbf{n}_g \label{eq:wm} \\
    \mathbf{a}_m &= {^I}\mathbf{a}+{^I}\mathbf{g}+\mathbf{b}_a+\mathbf{n}_a \label{eq:am}
\end{align}
where $\mathbf{n}_g\sim\mathcal{N}(\mathbf{0},\sigma_{g}^2\mathbf{I}_3)$ and $\mathbf{n}_a\sim\mathcal{N}(\mathbf{0},\sigma_{a}^2\mathbf{I}_3)$ are the zero-mean white Gaussian sensor noise, and ${^I}\mathbf{g}$ characterizes the gravity effect on the IMU frame.

Linearizing \eqref{eq:xdot} about the current state estimate yields the following continuous-time IMU state propagation:
\begin{equation}
    \begin{split}
    &{^\tau_k}\dot{\hat{\bar{q}}} = \frac{1}{2}\boldsymbol{\Omega}(\hat{\boldsymbol{\omega}}){^\tau_k}\hat{\bar{q}}, \;\;
    {^{R_k}}\dot{\hat{\mathbf{p}}}_{I_\tau} = {^\tau_k}\mathbf{C}_{\hat{\bar{q}}}^\top\hat{\mathbf{v}}_{I_\tau}, \\
    &\dot{\hat{\mathbf{v}}}_{I_\tau} = \hat{\mathbf{a}}-{^\tau}\hat{\mathbf{g}}-\lfloor\hat{\boldsymbol{\omega}}\times\rfloor\hat{\mathbf{v}}_{I_\tau}, \;\; \dot{\hat{\mathbf{b}}}_g = \mathbf{0}_{3\times1}, \;\; \dot{\hat{\mathbf{b}}}_a = \mathbf{0}_{3\times1}
    \end{split}
    \label{eq:xdothat}
\end{equation}
where for brevity we have denoted $\hat{\boldsymbol{\omega}}=\boldsymbol{\omega}_m-\hat{\mathbf{b}}_g$ and $\hat{\mathbf{a}}=\mathbf{a}_m-\hat{\mathbf{b}}_a$, ${^\tau_k}\mathbf{C}_{\hat{\bar{q}}}=\mathbf{C}({^\tau_k}\hat{\bar{q}})$, and ${^\tau}\hat{\mathbf{g}}={^\tau_k}\mathbf{C}_{\hat{\bar{q}}}{^{R_k}}\hat{\mathbf{g}}$. Accordingly, with both \eqref{eq:xdot} and \eqref{eq:xdothat}, we have continuous-time robocentric error-state model in the form of:
\begin{equation}
    {^{R_k}}\dot{\tilde{\mathbf{x}}}_\tau = \mathbf{F}{^{R_k}}\tilde{\mathbf{x}}_\tau+\mathbf{G}\mathbf{n}
\end{equation}
where $\mathbf{n}=[\mathbf{n}_g^\top \quad \mathbf{n}_{wg}^\top \quad \mathbf{n}_a^\top \quad \mathbf{n}_{wa}^\top]^\top$ is the IMU input noise vector, $\mathbf{F}$ is the robocentric error-state transition matrix, and $\mathbf{G}$ is the noise Jacobian, respectively (see \eqref{eq:sysms}).

\newcounter{tempEquationCounter} 
\newcounter{thisEquationNumber}
\newenvironment{floatEq}
{\setcounter{thisEquationNumber}{\value{equation}}\addtocounter{equation}{1}
\begin{figure*}[!t]
\normalsize\setcounter{tempEquationCounter}{\value{equation}}
\setcounter{equation}{\value{thisEquationNumber}}
}
{\setcounter{equation}{\value{tempEquationCounter}}
\hrulefill\vspace*{1pt}
\end{figure*}
}
\begin{floatEq}
\begin{equation}
    \mathbf{F} =
    \begin{bmatrix}
    \mathbf{0}_3& \mathbf{0}_3& \mathbf{0}_3& \mathbf{0}_3& \mathbf{0}_3& \mathbf{0}_3& \mathbf{0}_3& \mathbf{0}_3 \\
    \mathbf{0}_3& \mathbf{0}_3& \mathbf{0}_3& \mathbf{0}_3& \mathbf{0}_3& \mathbf{0}_3& \mathbf{0}_3& \mathbf{0}_3 \\
    \mathbf{0}_3& \mathbf{0}_3& \mathbf{0}_3& \mathbf{0}_3& \mathbf{0}_3& \mathbf{0}_3& \mathbf{0}_3& \mathbf{0}_3 \\
    \mathbf{0}_3& \mathbf{0}_3& \mathbf{0}_3&
    -\lfloor\hat{\boldsymbol{\omega}}\times\rfloor& \mathbf{0}_3& \mathbf{0}_3& -\mathbf{I}_3& \mathbf{0}_3 \\
    \mathbf{0}_3& \mathbf{0}_3& \mathbf{0}_3&
    -{^\tau_k}\mathbf{C}_{\hat{\bar{q}}}^\top\lfloor\hat{\mathbf{v}}_{I_\tau}\times\rfloor& \mathbf{0}_3& {^\tau_k}\mathbf{C}_{\hat{\bar{q}}}^\top& \mathbf{0}_3& \mathbf{0}_3 \\
    \mathbf{0}_3& \mathbf{0}_3& -{^\tau_k}\mathbf{C}_{\hat{\bar{q}}}&
    -\lfloor{^\tau}\hat{\mathbf{g}}\times\rfloor& \mathbf{0}_3& -\lfloor\hat{\boldsymbol{\omega}}\times\rfloor&
    -\lfloor\hat{\mathbf{v}}_{I_\tau}\times\rfloor& -\mathbf{I}_3 \\
    \mathbf{0}_3& \mathbf{0}_3& \mathbf{0}_3& \mathbf{0}_3& \mathbf{0}_3& \mathbf{0}_3& \mathbf{0}_3& \mathbf{0}_3 \\
    \mathbf{0}_3& \mathbf{0}_3& \mathbf{0}_3& \mathbf{0}_3& \mathbf{0}_3& \mathbf{0}_3& \mathbf{0}_3& \mathbf{0}_3
    \end{bmatrix}, \quad
    \mathbf{G} =
    \begin{bmatrix}
    \mathbf{0}_3& \mathbf{0}_3& \mathbf{0}_3& \mathbf{0}_3 \\
    \mathbf{0}_3& \mathbf{0}_3& \mathbf{0}_3& \mathbf{0}_3 \\
    \mathbf{0}_3& \mathbf{0}_3& \mathbf{0}_3& \mathbf{0}_3 \\
    -\mathbf{I}_3& \mathbf{0}_3& \mathbf{0}_3& \mathbf{0}_3 \\
    \mathbf{0}_3& \mathbf{0}_3& \mathbf{0}_3& \mathbf{0}_3 \\
    -\lfloor\hat{\mathbf{v}}_{I_\tau}\times\rfloor& \mathbf{0}_3& -\mathbf{I}_3& \mathbf{0}_3 \\
    \mathbf{0}_3& \mathbf{I}_3& \mathbf{0}_3& \mathbf{0}_3 \\
    \mathbf{0}_3& \mathbf{0}_3& \mathbf{0}_3& \mathbf{I}_3
    \end{bmatrix}
\label{eq:sysms}
\end{equation}
\end{floatEq}

For an actual implementation of EKF, the discrete-time propagation model is needed. First, the IMU state estimate, ${^{R_k}}\hat{\mathbf{x}}_{I_\tau}$, is obtained as follows: (i) by integrating \eqref{eq:xdothat} we have:
\begin{align}
    {^\tau_k}\hat{\bar{q}} &= \int^{t_\tau}_{t_k}{^s_k}\dot{\hat{\bar{q}}}\;{ds} \nonumber \\
    &= \int^{t_\tau}_{t_k}\frac{1}{2}\boldsymbol{\Omega}(\hat{\boldsymbol{\omega}}){^s_k}\hat{\bar{q}}\;{ds} \nonumber \\
    &= \int^{t_\tau}_{t_k}\frac{1}{2}\boldsymbol{\Omega}\big(\boldsymbol{\omega}_m-\hat{\mathbf{b}}_g\big){^s_k}\hat{\bar{q}}\;{ds}
\end{align}
which can be solved using zeroth order quaternion integrator \cite{trawny2005indirect}; (ii) ${^{R_k}}\hat{\mathbf{p}}_{I_\tau}$ and ${^{R_k}}\hat{\mathbf{v}}_{I_\tau}$ can be computed respectively using IMU preintegration, as:
\begin{align}
    {^{R_k}}\hat{\mathbf{p}}_{I_\tau} &= \hat{\mathbf{v}}_{I_k}\Delta{t}+\int^{t_\tau}_{t_k}\int^{s}_{t_k}{^\mu_k}\mathbf{C}_{\hat{\bar{q}}}^\top{^\mu}\hat{\mathbf{a}}\;{d\mu}{ds} \nonumber \\
    &= \hat{\mathbf{v}}_{I_k}\Delta{t}+\int^{t_\tau}_{t_k}\int^{s}_{t_k}{^\mu_k}\mathbf{C}_{\hat{\bar{q}}}^\top\big({^\mu}\mathbf{a}_m-\hat{\mathbf{b}}_a-{^\mu}\hat{\mathbf{g}}\big)\;{d\mu}{ds} \nonumber \\
    &= \hat{\mathbf{v}}_{I_k}\Delta{t}-\frac{1}{2}{^{R_k}}\hat{\mathbf{g}}\Delta{t}^2 \nonumber \\
    &\quad+\underbrace{\int^{t_\tau}_{t_k}\int^{s}_{t_k}{^\mu_k}\mathbf{C}_{\hat{\bar{q}}}^\top\big({^\mu}\mathbf{a}_m-\hat{\mathbf{b}}_a\big)\;{d\mu}{ds}}_{\Delta{\mathbf{p}}_{k,\tau}} \\
    {^{R_k}}\hat{\mathbf{v}}_{I_\tau} &= \hat{\mathbf{v}}_{I_k}+\int^{t_\tau}_{t_k}{^s_k}\mathbf{C}_{\hat{\bar{q}}}^\top{^s}\hat{\mathbf{a}}\;{ds} \nonumber \\
    &= \hat{\mathbf{v}}_{I_k}+\int^{t_\tau}_{t_k}{^s_k}\mathbf{C}_{\hat{\bar{q}}}^\top\big({^s}\mathbf{a}_m-\hat{\mathbf{b}}_a-{^s}\hat{\mathbf{g}}\big)\;{ds} \nonumber \\
    &= \hat{\mathbf{v}}_{I_k}-{^{R_k}}\hat{\mathbf{g}}\Delta{t}+
    \underbrace{\int^{t_\tau}_{t_k}{^s_k}\mathbf{C}_{\hat{\bar{q}}}^\top\big({^s}\mathbf{a}_m-\hat{\mathbf{b}}_a\big)\;{ds}}_{\Delta{\mathbf{v}}_{k,\tau}}
\end{align}
where $\Delta{t}=t_\tau-t_k$. Especially, the preintegrated terms, $\Delta{\mathbf{p}}$ and $\Delta{\mathbf{v}}$, can be recursively computed with all the incoming IMU measurements~\cite{Eckenhoff2016WAFR}.
Therefore, the estimate of velocity in the current IMU frame, $\hat{\mathbf{v}}_{I_\tau}$, can be obtained as $\hat{\mathbf{v}}_{I_\tau} = {^\tau_k}\mathbf{C}_{\hat{\bar{q}}}{^{R_k}}\hat{\mathbf{v}}_{I_\tau}$; (iii) assume the bias estimates are constant over the time interval $[t_k,t_{k+1}]$: $\hat{\mathbf{b}}_g=\hat{\mathbf{b}}_{g_k}$ and $\hat{\mathbf{b}}_a=\hat{\mathbf{b}}_{a_k}$ for both (i) and (ii).

Then, for covariance propagation, the discrete-time error-state transition matrix $\boldsymbol{\Phi}(t_{\tau+1},t_\tau)$ can be obtained using the forward Euler method over the time interval $[t_\tau,t_{\tau+1}]$:
\begin{equation}
    \boldsymbol{\Phi}(t_{\tau+1},t_\tau) = \exp(\mathbf{F}\delta{t}) \simeq \mathbf{I}_{24}+\mathbf{F}\delta{t} =: \boldsymbol{\Phi}_{\tau+1,\tau}
    \label{eq:phid}
\end{equation}
where $\delta{t}=t_{\tau+1}-t_\tau$. It results in the covariance propagation starting from $\mathbf{P}_k$ (not $\mathbf{P}_{k|k}$) at time-step $k$:
\begin{equation}
    \mathbf{P}_{\tau+1|k} = \boldsymbol{\Phi}_{\tau+1,\tau}\mathbf{P}_{\tau|k}\boldsymbol{\Phi}_{\tau+1,\tau}^\top+\mathbf{G}\boldsymbol{\Sigma}\mathbf{G}^\top\delta{t}
    \label{eq:covprop}
\end{equation}
where $\boldsymbol{\Sigma}={\mathbf{Diag}}\left[\sigma_g^2\mathbf{I}_3 \quad \sigma_{wg}^2\mathbf{I}_3 \quad \sigma_a^2\mathbf{I}_3 \quad \sigma_{wa}^2\mathbf{I}_3 \right]$ denotes the continuous-time input noise covariance matrix, and the detailed derivations can be found in our companion technical report~\cite{supp}.

For the augmented state, $\hat{\mathbf{x}}_k$, we consider that the relative poses in the sliding window are static, i.e., $\hat{\mathbf{w}}_\tau=\hat{\mathbf{w}}_k$, and the corresponding augmented covariance matrix, $\mathbf{P}_k$, can be partitioned according to the robocentric state and the sliding-window state (see \eqref{eq:xekf}), as:
\begin{equation}
\mathbf{P}_k = \begin{bmatrix}
\mathbf{P}_{{\mathbf{x}\mathbf{x}}_k} & \mathbf{P}_{{\mathbf{x}\mathbf{w}}_k} \\
\mathbf{P}_{{\mathbf{x}\mathbf{w}}_k}^\top & \mathbf{P}_{{\mathbf{w}\mathbf{w}}_k}
\end{bmatrix}
\end{equation}
The propagated covariance at time-step $\tau+1$ is given by:
\begin{equation}
    \mathbf{P}_{\tau+1|k} =
    \begin{bmatrix}
    \mathbf{P}_{{\mathbf{x}\mathbf{x}}_{\tau+1|k}} & \boldsymbol{\Phi}_{\tau+1,k}\mathbf{P}_{{\mathbf{x}\mathbf{w}}_k} \\
    \mathbf{P}_{{\mathbf{x}\mathbf{w}}_k}^\top\boldsymbol{\Phi}_{\tau+1,k}^\top & \mathbf{P}_{{\mathbf{w}\mathbf{w}}_k}
    \end{bmatrix}
\end{equation}
where $\mathbf{P}_{{\mathbf{x}\mathbf{x}}_{\tau+1|k}}$ can be recursively computed using \eqref{eq:covprop}, and the compound error-state transition matrix is computed as: 
\begin{equation}
\boldsymbol{\Phi}_{\tau+1,k} = \prod\limits_{\ell=k}^{\tau}\boldsymbol{\Phi}_{\ell+\delta{t},\ell}
\end{equation}
with initial condition $\boldsymbol{\Phi}_{k,k}=\mathbf{I}_{24}$.

\subsection{Update}

\subsubsection{\bf Inverse-depth measurement model}

We adopt the {\em inverse depth} parameterization \cite{civera2008inverse} for the landmarks observed by a monocular camera, while being tailored for the proposed R-VIO. Assuming a single landmark, $L_j$, that has been observed from a set of $n_j$ robocentric frames, $\mathcal{R}_j$, the measurement of $L_j$ in the set of $n_j$ corresponding camera frames, $\mathcal{C}_j$, is given by the following perspective projection model with the $xyz$ coordinates ($i\in\mathcal{C}_j$):
\begin{equation}
    \mathbf{z}_{j,i} = \frac{1}{z^i_j}
    \begin{bmatrix}
    x^i_j \\ y^i_j
    \end{bmatrix}+\mathbf{n}_{j,i}, \quad
    {^{C_i}}\mathbf{p}_{L_j} =
    \begin{bmatrix}
    x^i_j & y^i_j & z^i_j
    \end{bmatrix}^\top
    \label{eq:zm}
\end{equation}
where $\mathbf{n}_{j,i}\sim\mathcal{N}(\mathbf{0},\sigma_{im}^2\mathbf{I}_2)$ is an additive image noise, and ${^{C_i}}\mathbf{p}_{L_j}$ denotes the position of $L_j$ in the camera frame $\{C_i\}$. The inverse-depth form for ${^{C_i}}\mathbf{p}_{L_j}$ can be written as:
\begin{equation}
    {^{C_i}}\mathbf{p}_{L_j}  =
    {^i_1}\bar{\mathbf{C}}_{\bar{q}}{^{C_1}}\mathbf{p}_{L_j}+{^i}\bar{\mathbf{p}}_1 =: \mathbf{f}_i(\phi,\psi,\rho) \nonumber
\end{equation}
\begin{equation}
    {^{C_1}}\mathbf{p}_{L_j} =
    \frac{1}{\rho}\mathbf{e}(\phi,\psi), \quad
    \mathbf{e} =
    \begin{bmatrix}
    \cos\phi\sin\psi \\ \sin\phi \\ \cos\phi\cos\psi
    \end{bmatrix}
    \label{eq:pinv}
\end{equation}
where ${^{C_1}}\mathbf{p}_{L_j}$ is the position of $L_j$ in the first camera frame of $\mathcal{C}_j$, $\mathbf{e}$ is the directional vector with $\phi$ and $\psi$ the elevation and azimuth expressed in $\{C_1\}$, and $\rho$ is the inverse depth along $\mathbf{e}$. In particular, the relative poses between $\{C_1\}$ and $\{C_i\}$, $i=2,\ldots,n_j$, are expressed using the camera-to-IMU calibration parameters, $\{{^C_I}\bar{q},{^C}\mathbf{p}_I\}$, and the sliding-window state, $\mathbf{w}$, as:
\begin{align}
    {^i_1}\bar{\mathbf{C}}_{\bar{q}} &=
    {^C_I}\mathbf{C}_{\bar{q}}{^i_1}\mathbf{C}_{\bar{q}}{^I_C}\mathbf{C}_{\bar{q}} \\
    {^i}\bar{\mathbf{p}}_1 &=
    {^C_I}\mathbf{C}_{\bar{q}}{^i_1}\mathbf{C}_{\bar{q}}{^I}\mathbf{p}_C+{^C_I}\mathbf{C}_{\bar{q}}{^{R_i}}\mathbf{p}_{R_1}+{^C}\mathbf{p}_I
    \label{eq:pc}
\end{align}
where we have used the following identities ($n=2,\ldots,i$):
\begin{align}
    {^i_1}\mathbf{C}_{\bar{q}} &= {^i_{i-1}}\mathbf{C}_{\bar{q}}{^{i-1}_{i-2}}\mathbf{C}_{\bar{q}}\ldots{^n_{n-1}}\mathbf{C}_{\bar{q}}\ldots{^2_1}\mathbf{C}_{\bar{q}} \\
    {^{R_i}}\mathbf{p}_{R_1} &= -\big({^i_{i-1}}\mathbf{C}_{\bar{q}}{^{R_{i-1}}}\mathbf{p}_{R_i}+{^i_{i-2}}\mathbf{C}_{\bar{q}}{^{R_{i-2}}}\mathbf{p}_{R_{i-1}}+\ldots \nonumber \\
    &\qquad+{^i_{n-1}}\mathbf{C}_{\bar{q}}{^{R_{n-1}}}\mathbf{p}_{R_n}+\ldots+{^i_1}\mathbf{C}_{\bar{q}}{^{R_1}}\mathbf{p}_{R_2}\big)
\end{align}
Interestingly, if the landmark is at infinity (i.e., $\rho\rightarrow0$), we can normalize \eqref{eq:pinv} by premultiplying $\rho$ to avoid potential numerical issues, as:
\begin{align}
    \rho{{^{C_i}}\mathbf{p}_{L_j}}  
    &= {^i_1}\bar{\mathbf{C}}_{\bar{q}}\mathbf{e}(\phi,\psi)+\rho{^i}\bar{\mathbf{p}}_1 \nonumber \\
    &=: \mathbf{h}_i(\mathbf{w},\phi,\psi,\rho) 
    =
    \begin{bmatrix}
    h_{i,1}(\mathbf{w},\phi,\psi,\rho) \\ h_{i,2}(\mathbf{w},\phi,\psi,\rho) \\ h_{i,3}(\mathbf{w},\phi,\psi,\rho)
    \end{bmatrix} 
    \label{eq:hinv}
\end{align}
Note that, this equation reserves the perspective geometry of \eqref{eq:pinv} while encompassing two degenerate cases: (i) observing the landmarks at infinity (i.e., $\rho\rightarrow0$), and (ii) having low parallax between two camera poses (i.e., ${^i}\bar{\mathbf{p}}_1\rightarrow0$). For both cases, \eqref{eq:hinv} can be approximated by $\mathbf{h}_i\simeq{^i_1}\bar{\mathbf{C}}_{\bar{q}}\mathbf{e}(\phi,\psi)$, and hence the corresponding measurements can still provide the information about the camera orientation.

Therefore, we introduce the following inverse depth-based measurement model for the proposed R-VIO:
\begin{equation}
    \mathbf{z}_{j,i} = \frac{1}{h_{i,3}(\mathbf{w},\phi,\psi,\rho)}
    \begin{bmatrix}
    h_{i,1}(\mathbf{w},\phi,\psi,\rho) \\ h_{i,2}(\mathbf{w},\phi,\psi,\rho)
    \end{bmatrix}+\mathbf{n}_{j,i}
    \label{eq:zinv}
\end{equation}
Denoting $\boldsymbol{\lambda}=[\phi,\psi,\rho]^\top$ and linearizing \eqref{eq:zinv} at the current state estimates, $\hat{\mathbf{x}}$ and $\hat{\boldsymbol{\lambda}}$, we have the following measurement residual equation:
\begin{equation}
    \mathbf{r}_{j,i} = \mathbf{z}_{j,i}-\hat{\mathbf{z}}_{j,i} \simeq \mathbf{H}_{\mathbf{x}_{j,i}}\tilde{\mathbf{x}}+\mathbf{H}_{\boldsymbol{\lambda}_{j,i}}\tilde{\boldsymbol{\lambda}}+\mathbf{n}_{j,i} \nonumber
\end{equation}
where
\begin{equation}
    \begin{split}
    &\mathbf{H}_{\mathbf{x}_{j,i}} = \mathbf{H}_{\text{p}_{j,i}}
    \begin{bmatrix}
    \mathbf{0}_{3\times24} &\!\!
    \ldots &\!\!
    \mathbf{H}_{\mathbf{w}_{j,i}} &\!\!
    \ldots &\!\!
    \end{bmatrix}, \\
    &\mathbf{H}_{\boldsymbol{\lambda}_{j,i}} = \mathbf{H}_{\text{p}_{j,i}}\mathbf{H}_{\text{inv}_{j,i}}, \\
    &\mathbf{H}_{\text{p}_{j,i}} = \frac{1}{\hat{h}_{i,3}}
    \begin{bmatrix}
    1 & 0 & -\frac{\hat{h}_{i,1}}{\hat{h}_{i,3}} \\
    0 & 1 & -\frac{\hat{h}_{i,2}}{\hat{h}_{i,3}}
    \end{bmatrix}, \\
    &\mathbf{H}_{\text{inv}_{j,i}} =
    \frac{\partial{\mathbf{h}_i}}{\partial{\tilde{\boldsymbol{\lambda}}}} =
    \begin{bmatrix}
    \frac{\partial{\mathbf{h}_i}}{\partial{[\tilde{\phi},\tilde{\psi}]^\top}} & \frac{\partial{\mathbf{h}_i}}{\partial{\tilde{\rho}}}
    \end{bmatrix} \\
    &\qquad\;\;= \begin{bmatrix}
    {^i_1}\bar{\mathbf{C}}_{\hat{\bar{q}}}
    \begin{bmatrix}
    -\sin\hat{\phi}\sin\hat{\psi} & \cos\hat{\phi}\cos\hat{\psi} \\
    \cos\hat{\phi} & 0 \\
    -\sin\hat{\phi}\cos\hat{\psi} & -\cos\hat{\phi}\sin\hat{\psi}
    \end{bmatrix} &
    {^i}\hat{\bar{\mathbf{p}}}_1
    \end{bmatrix}, \\
    &\mathbf{H}_{\mathbf{w}_{j,i}} =
    \frac{\partial{\mathbf{h}_i}}{\partial{\tilde{\mathbf{w}}}} =
    \begin{bmatrix}
    \frac{\partial{\mathbf{h}_i}}{\partial{\delta{\boldsymbol{\theta}}_2}} & 
    \frac{\partial{\mathbf{h}_i}}{\partial{{^{R_1}}\tilde{\mathbf{p}}}_{R_2}} &\!\! \ldots &\!\!
    \frac{\partial{\mathbf{h}_i}}{\partial{\delta{\boldsymbol{\theta}}_i}} & 
    \frac{\partial{\mathbf{h}_i}}{\partial{{^{R_{i-1}}}\tilde{\mathbf{p}}}_{R_i}}
    \end{bmatrix} \\
    &\frac{\partial{\mathbf{h}_i}}{\partial{\delta{\boldsymbol{\theta}}_n}} =
    {^C_I}\mathbf{C}_{\bar{q}}{^i_1}\mathbf{C}_{\hat{\bar{q}}}\lfloor\big({^I_C}\mathbf{C}_{\bar{q}}\hat{\mathbf{e}}+\hat{\rho}{^I}\mathbf{p}_C-\hat{\rho}{^{R_1}}\hat{\mathbf{p}}_{R_n}\big)\times\rfloor{^n_1}\mathbf{C}_{\hat{\bar{q}}}^\top, \\
    &\frac{\partial{\mathbf{h}_i}}{\partial{{^{R_{n-1}}}\tilde{\mathbf{p}}_{R_n}}} =
    -\hat{\rho}{^C_I}\mathbf{C}_{\bar{q}}{^i_{n-1}}\mathbf{C}_{\hat{\bar{q}}}, \quad n=2,\ldots,i.
    \end{split}
    \label{eq:Jacob}
\end{equation}
Specifically, $\mathbf{H}_{\mathbf{x}_{j,i}}$ and $\mathbf{H}_{\boldsymbol{\lambda}_{j,i}}$ are the Jacobians with respect to the vectors of state and inverse depth, respectively. Note that, through the Jacobian $\mathbf{H}_{\mathbf{w}_{j,i}}$ each measurements of $L_j$ is correlated to a sequence of relative poses in $\mathbf{w}$, building up a {\em dense} connection between the measurements and the state, however, without increasing the computational complexity. This is also different from~\cite{mourikis2007multi} where each measurement is only correlated to the global pose from which it is observed. Since an estimate of $\boldsymbol{\lambda}$ is needed for computing $\hat{\mathbf{z}}_{j,i}$ and $\mathbf{H}_{\boldsymbol{\lambda}_{j,i}}$, a local BA is firstly solved using the measurements, $\mathbf{z}_{j,i}$, $i\in\mathcal{C}_j$, and the relative pose estimates, $\hat{\mathbf{w}}$ (see Appendix A). After stacking the residuals $\mathbf{r}_{j,i}$, $i\in\mathcal{C}_j$, we obtain:
\begin{equation}
    \mathbf{r}_j \simeq \mathbf{H}_{\mathbf{x}_j}\tilde{\mathbf{x}}+\mathbf{H}_{\boldsymbol{\lambda}_j}\tilde{\boldsymbol{\lambda}}+\mathbf{n}_j
    \label{eq:rinv}
\end{equation}
Assuming the measurements obtained from different camera poses are independent, the covariance matrix of $\mathbf{n}_j$ is hence $\mathbf{R}_j=\sigma_{im}^2\mathbf{I}_{2n_j}$. As $\hat{\mathbf{x}}$ (precisely, $\hat{\mathbf{w}}$) is used to compute $\hat{\boldsymbol{\lambda}}$, the inverse-depth error, $\tilde{\boldsymbol{\lambda}}$, is correlated to $\tilde{\mathbf{x}}$. In order to find a valid residual for EKF update, we project  \eqref{eq:rinv} to the left nullspace of $\mathbf{H}_{\boldsymbol{\lambda}_j}$ (i.e., $\mathbf{O}_{\boldsymbol{\lambda}_j}^\top\mathbf{H}_{\boldsymbol{\lambda}_j}=\mathbf{0}$, and $\mathbf{O}_{\boldsymbol{\lambda}_j}^\top\mathbf{O}_{\boldsymbol{\lambda}_j}=\mathbf{I}$):
\begin{align}
    \bar{\mathbf{r}}_{j} = \mathbf{O}_{\boldsymbol{\lambda}_j}^\top\mathbf{r}_j \simeq \mathbf{O}_{\boldsymbol{\lambda}_j}^\top\mathbf{H}_{\mathbf{x}_j}\tilde{\mathbf{x}}+\mathbf{O}_{\boldsymbol{\lambda}_j}^\top\mathbf{n}_j = \bar{\mathbf{H}}_{\mathbf{x}_j}\tilde{\mathbf{x}}+\bar{\mathbf{n}}_j
    \label{eq:ol}
\end{align}
In general, $\mathbf{H}_{\boldsymbol{\lambda}_j}$ is $2n_j\times3$ matrix with full column rank and the nullspace of dimension $2n_j-3$, which can be efficiently computed, for example, using the {\em Givens rotations} \cite{golub2012matrix}, with $O(n_j^2)$ complexity. 
Since $\mathbf{O}_{\boldsymbol{\lambda}_j}$ is unitary, the covariance matrix of $\bar{\mathbf{n}}_j$ becomes:
\begin{equation}
    \bar{\mathbf{R}}_j=\mathbf{O}_{\boldsymbol{\lambda}_j}^\top\mathbf{R}_j\mathbf{O}_{\boldsymbol{\lambda}_j}=\sigma_{im}^2\mathbf{I}_{2n_j-3}
    \label{eq:Rj}
\end{equation}

At this point, let us examine some special cases where $\mathbf{H}_{\boldsymbol{\lambda}_{j,i}}$ (equivalently, $\mathbf{H}_{p_{j,i}}$ or $\mathbf{H}_{\text{inv}_{j,i}}$) becomes rank deficient (see \eqref{eq:Jacob}), which would affect computing the residual~\eqref{eq:rinv}.
First of all, if $\mathbf{H}_{p_{j,i}}$ becomes rank deficient, then we find two possible causes about $\hat{\mathbf{h}}_i$: (i) $\hat{h}_{i,1}=\hat{h}_{i,2}=\hat{h}_{i,3}$, which means that the image size should be at least $2f\times2f$ ($f$ is the focal length), or (ii) $\hat{h}_{i,1}\rightarrow0$ and $\hat{h}_{i,2}\rightarrow0$, which means that the measurement of $L_j$ is close to the principal point of camera image.
Secondly, if $\mathbf{H}_{\text{inv}_{j,i}}$ is rank deficient, we can also find two possible causes: (iii) $\cos\hat{\phi}\rightarrow0$, which means that we have either infinitely small focal length or infinitely large image size for the camera so that $\lvert\hat{\phi}\rvert\rightarrow\pi/2$ can happen, or (iv) ${^i}\hat{\bar{\mathbf{p}}}_1\rightarrow0$, which means a small parallax between $\{C_1\}$ and $\{C_i\}$. 
Among these causes, (i) is about the selection of the lens which must be restricted by the camera image size, and (iii) is too ideal to be realized in the real world; while (ii) and (iv) are common in the visual navigation which can be effectively detected by checking the values of pixel measurements and relative pose estimates, respectively. 
Therefore, we can discard the measurements that meet (ii) when computing the Jacobians. 
However, in the case (iv) (e.g., pure rotation or motionless), since the last column of $\mathbf{H}_{\boldsymbol{\lambda}_{j,i}}$ (and hence $\mathbf{H}_{\boldsymbol{\lambda}_j}$) approaches zero, we perform the Givens rotations only for the first two columns of $\mathbf{H}_{\boldsymbol{\lambda}_j}$ to guarantee a valid nullspace projection numerically (see \eqref{eq:ol}), and thus the dimension of $\bar{\mathbf{r}}_j$ increases by one (see \eqref{eq:Rj}). 
In addition, before EKF update, the {\em Mahalanobis distance} for each landmark is checked using all the measurements,
serving as the probabilistic outlier rejection:
\begin{equation}
    \mathcal{D}_j = \bar{\mathbf{r}}_j^\top\big(\bar{\mathbf{H}}_{\mathbf{x}_j}\mathbf{P}\bar{\mathbf{H}}_{\mathbf{x}_j}^\top+\bar{\mathbf{R}}_j\big)^{-1}\bar{\mathbf{r}}_j \leq \chi_{r,1-\alpha}^2
    \label{eq:maha}
\end{equation}
where $\chi_{r,1-\alpha}^2$ is a threshold obtained from the $\chi^2$ distribution with $r=\text{dim}(\bar{\mathbf{r}}_j)$, and $\alpha$ the significance level (e.g., 0.05). If \eqref{eq:maha} holds, then landmark $L_j$ is accepted as an inlier and used for EKF update.

\subsubsection{\bf EKF update}

Assuming that at time-step $k+1$ we have the measurements of $M$ landmarks to process, 
we can stack the resulting $\bar{\mathbf{r}}_j$, $j=1,\ldots,M$, to have:
\begin{equation}
    \bar{\mathbf{r}} = \bar{\mathbf{H}}_{\mathbf{x}}\tilde{\mathbf{x}}+\bar{\mathbf{n}}
    \label{eq:r}
\end{equation}
which is of dimension $d=\sum_{j=1}^M(2n_j-3)$. However, in practice, $d$ could be a large number even if $M$ is small (e.g., $d=170$, if 10 landmarks are observed from 10 robot poses). To reduce the computational complexity, QR decomposition is applied to \eqref{eq:r} to compress the dimension of measurement model. Note that, $\bar{\mathbf{H}}_{\mathbf{x}}$ is {\em rank deficient} with the zero columns corresponding to the robocentric state, while the nonzero columns corresponding to the states of relative poses in the sliding window are linearly independent. Therefore, to save the computational cost the QR decomposition can be applied to the nonzero part of $\bar{\mathbf{H}}_{\mathbf{x}}$ only, as:
\begin{align}
    \bar{\mathbf{H}}_{\mathbf{x}} &=
    \begin{bmatrix}
    \mathbf{0}_{d\times24} & \bar{\mathbf{H}}_\mathbf{w}
    \end{bmatrix} \nonumber \\
    &=
    \begin{bmatrix}
    \mathbf{0}_{d\times24} &
    \begin{bmatrix}
    \mathbf{Q}_1 & \mathbf{Q}_2
    \end{bmatrix}
    \begin{bmatrix}
    \bar{\mathbf{T}}_\mathbf{w} \\ \mathbf{0}_{(d-6(N-1))\times6(N-1)}
    \end{bmatrix}
    \end{bmatrix} \nonumber \\
    &=
    \begin{bmatrix}
    \mathbf{Q}_1 & \mathbf{Q}_2
    \end{bmatrix}
    \begin{bmatrix}
    \mathbf{0}_{d\times24} &
    \begin{bmatrix}
    \bar{\mathbf{T}}_\mathbf{w} \\
    \mathbf{0}_{(d-6(N-1))\times6(N-1)}
    \end{bmatrix}
    \end{bmatrix} \nonumber
\end{align}
where $\mathbf{Q}_1$ and $\mathbf{Q}_2$ are the unitary matrices of dimension $d\times6(N-1)$ and $d\times(d-6(N-1))$, respectively, and $\bar{\mathbf{T}}_{\mathbf{w}}$ is an upper triangular matrix of dimension $6(N-1)$. With this definition, \eqref{eq:r} yields:
\begin{align}
    &\;\bar{\mathbf{r}} =
    \begin{bmatrix}
    \mathbf{Q}_1 & \mathbf{Q}_2
    \end{bmatrix}
    \begin{bmatrix}
    \mathbf{0} & \bar{\mathbf{T}}_\mathbf{w} \\
    \mathbf{0} & \mathbf{0}
    \end{bmatrix}
    \tilde{\mathbf{x}}
    +\bar{\mathbf{n}} \Rightarrow \nonumber \\
    &\begin{bmatrix}
    \mathbf{Q}_1^\top \\ \mathbf{Q}_2^\top
    \end{bmatrix}\bar{\mathbf{r}} =
    \begin{bmatrix}
    \mathbf{0} & \bar{\mathbf{T}}_\mathbf{w} \\
    \mathbf{0} & \mathbf{0}
    \end{bmatrix}\tilde{\mathbf{x}}
    +\begin{bmatrix}
    \mathbf{Q}_1^\top \\ \mathbf{Q}_2^\top
    \end{bmatrix}\bar{\mathbf{n}}
\end{align}
for which, we discard the lower $d-6(N-1)$ rows which are only about the measurement noise, but employ the upper $6(N-1)$ rows, instead of \eqref{eq:r}, as the residual for the EKF update:
\begin{equation}
    \breve{\mathbf{r}} = \mathbf{Q}_1^\top\bar{\mathbf{r}} =
    \begin{bmatrix}
    \mathbf{0} & \bar{\mathbf{T}}_\mathbf{w}
    \end{bmatrix}\tilde{\mathbf{x}}
    +\mathbf{Q}_1^\top\bar{\mathbf{n}} = \breve{\mathbf{H}}_\mathbf{x}\tilde{\mathbf{x}}+\breve{\mathbf{n}}
\end{equation}
where $\breve{\mathbf{n}}=\mathbf{Q}_1^\top\bar{\mathbf{n}}$ is the noise vector with covariance matrix $\breve{\mathbf{R}}=\mathbf{Q}_1^\top\bar{\mathbf{R}}\mathbf{Q}_1=\sigma_{im}^2\mathbf{I}_{6(N-1)}$. In particular, when we have $d\gg6(N-1)$ these can be done using the Givens rotations, with $O(N^2d)$ complexity. 
Based on that, the standard EKF update is performed as follows~\cite{Maybeck1979}:
\begin{align*}
    &\mathbf{K} = \mathbf{P}\breve{\mathbf{H}}_\mathbf{x}^\top\big(\breve{\mathbf{H}}_\mathbf{x}\mathbf{P}\breve{\mathbf{H}}_\mathbf{x}^\top+\breve{\mathbf{R}}\big)^{-1}\\
    &\hat{\mathbf{x}}_{k+1|k+1} =     \hat{\mathbf{x}}_{k+1|k} + \mathbf{K}\breve{\mathbf{r}}\\
    &\mathbf{P}_{k+1|k+1} = \big(\mathbf{I}-\mathbf{K}\breve{\mathbf{H}}_\mathbf{x}\big)\mathbf{P}_{k+1|k}\big(\mathbf{I}-\mathbf{K}\breve{\mathbf{H}}_\mathbf{x}\big)^\top+\mathbf{K}\breve{\mathbf{R}}\mathbf{K}^\top.
\end{align*}

\subsubsection{\bf State augmentation}

To utilize the most accurate relative motion information for estimation, we employ the stochastic cloning~\cite{roumeliotis2002icra}. In particular, the state augmentation is performed right after the EKF update, where a copy of the updated relative pose estimate, $\{{^{k+1}_k}\hat{\bar{q}}_{k+1|k+1}$,${^{R_k}}\hat{\mathbf{p}}_{I_{k+1|k+1}}\}$, is appended to the end of the current sliding-window state, $\hat{\mathbf{w}}_{k+1|k+1}$. Accordingly, the covariance matrix is augmented as follows:
\begin{equation}
    \mathbf{P}_{k+1|k+1} \leftarrow
    \begin{bmatrix}
    \mathbf{I}_{24+6(N-1)} \\
    \mathbf{J}
    \end{bmatrix}
    \mathbf{P}_{k+1|k+1}
    \begin{bmatrix}
    \mathbf{I}_{24+6(N-1)} \\
    \mathbf{J}
    \end{bmatrix}^\top, \nonumber
\end{equation}
\begin{equation}
    \mathbf{J} =
    \begin{bmatrix}
    \mathbf{0}_{3\times9} & \mathbf{I}_3 & \mathbf{0}_3 & \mathbf{0}_{3\times9} & \mathbf{0}_{3\times6(N-1)} \\
    \mathbf{0}_{3\times9} & \mathbf{0}_3 & \mathbf{I}_3 & \mathbf{0}_{3\times9} & \mathbf{0}_{3\times6(N-1)}
    \end{bmatrix}
\end{equation}

\subsection{Composition}

Note that in the proposed robocentric formulation, every time when the update is finished, we shift the frame of reference of estimation. At this point, the IMU frame $\{I_{k+1}\}$, is set as the local frame of reference, i.e., $\{R_{k+1}\}$, to replace $\{R_k\}$. The state vector expressed in $\{R_{k+1}\}$ is then obtained as:
\begin{align}
    &\hat{\mathbf{x}}_{k+1} =
    \begin{bmatrix}
    {^{R_{k+1}}}\hat{\mathbf{x}}_{k+1} \\ \hat{\mathbf{w}}_{k+1}
    \end{bmatrix} =
    \begin{bmatrix}
    {^{R_k}}\hat{\mathbf{x}}_{k+1|k+1}\boxplus{^{R_k}}\hat{\mathbf{x}}_{I_{k+1|k+1}} \\
    \hat{\mathbf{w}}_{k+1|k+1}
    \end{bmatrix} \Rightarrow \nonumber \\
    &\begin{bmatrix}
    {^{k+1}_G}\hat{\bar{q}} \\ {^{R_{k+1}}}\hat{\mathbf{p}}_G \\ {^{R_{k+1}}}\hat{\mathbf{g}} \\ {^{k+1}_{k+1}}\hat{\bar{q}} \\ {^{R_{k+1}}}\hat{\mathbf{p}}_{R_{k+1}} \\ \hat{\mathbf{v}}_{R_{k+1}} \\ \hat{\mathbf{b}}_{g_{k+1}} \\ \hat{\mathbf{b}}_{a_{k+1}} \\
    \hat{\mathbf{w}}_{k+1}
    \end{bmatrix} =
    \begin{bmatrix}
    {^{k+1}_k}\hat{\bar{q}}\otimes{^k_G}\hat{\bar{q}} \\
    {^{k+1}_k}\mathbf{C}_{\hat{\bar{q}}}\big({^{R_k}}\hat{\mathbf{p}}_{G_{k+1}}-{^{R_k}}\hat{\mathbf{p}}_{I_{k+1}}\big) \\
    {^{k+1}_k}\mathbf{C}_{\hat{\bar{q}}}{^{R_k}}\hat{\mathbf{g}} \\
    \bar{q}_0 \\
    \mathbf{0}_{3\times1} \\
    \hat{\mathbf{v}}_{I_{k+1}} \\
    \hat{\mathbf{b}}_{g_{k+1}} \\
    \hat{\mathbf{b}}_{a_{k+1}} \\
    \hat{\mathbf{w}}_{k+1|k+1}
    \end{bmatrix} 
\end{align}
where $\bar{q}_0=[0,0,0,1]^\top$, $\boxplus$ denotes the state composition operator, and for brevity of presentation we have omitted the subscripts for the robocentric state. Note that, the relative pose in the IMU state is {\em reset} to the origin, while the velocity and biases in the current IMU frame are not affected by the change of frame of reference. The corresponding covariance composition is performed using the Jacobian:
\begin{equation}
    \mathbf{P}_{k+1} = \mathbf{U}_{k+1}\mathbf{P}_{k+1|k+1}\mathbf{U}_{k+1}^\top
    \label{eq:pcomp}
\end{equation}
\begin{equation}
    \begin{split}
    &\mathbf{U}_{k+1} = \frac{\partial{\tilde{\mathbf{x}}_{k+1}}}{\partial{\tilde{\mathbf{x}}_{k+1|k+1}}} =
    \begin{bmatrix}
    \mathbf{V}_{k+1} & \mathbf{0}_{24\times6N} \\
    \mathbf{0}_{6N\times24} & \mathbf{I}_{6N}
    \end{bmatrix}
    \end{split}
\end{equation}
where $\mathbf{V}_{k+1}$ is the Jacobian with respect to the robocentric state (see \eqref{eq:Jv}). Specifically, the corresponding covariance of the relative pose is also {\em reset} to zero, i.e., no uncertainty for the robocentric frame of reference itself.

\newcounter{tempEquationCounter1} 
\newcounter{thisEquationNumber1}
\newenvironment{floatEq1}
{\setcounter{thisEquationNumber1}{\value{equation}}\addtocounter{equation}{1}
\begin{figure*}[!t]
\normalsize\setcounter{tempEquationCounter1}{\value{equation}}
\setcounter{equation}{\value{thisEquationNumber1}}
}
{\setcounter{equation}{\value{tempEquationCounter1}}
\hrulefill\vspace*{1pt}
\end{figure*}
}
\begin{floatEq}
\begin{equation}
    \mathbf{V}_{k+1} =
    \frac{\partial{{^{R_{k+1}}}\tilde{\mathbf{x}}_{k+1}}}{\partial{{^{R_k}}\tilde{\mathbf{x}}_{k+1|k+1}}}
    = \begin{bmatrix}
    {^{k+1}_k}\mathbf{C}_{\hat{\bar{q}}} & \mathbf{0}_3 & \mathbf{0}_3 & \mathbf{I}_3 & \mathbf{0}_3 & \mathbf{0}_3 & \mathbf{0}_3 & \mathbf{0}_3 \\
    \mathbf{0}_3 & {^{k+1}_k}\mathbf{C}_{\hat{\bar{q}}} & \mathbf{0}_3 & \lfloor{^{R_{k+1}}}\hat{\mathbf{p}}_{G}\times\rfloor & -{^{k+1}_k}\mathbf{C}_{\hat{\bar{q}}} & \mathbf{0}_3 & \mathbf{0}_3 & \mathbf{0}_3 \\
    \mathbf{0}_3 & \mathbf{0}_3 & {^{k+1}_k}\mathbf{C}_{\hat{\bar{q}}} & \lfloor{^{R_{k+1}}}\hat{\mathbf{g}}\times\rfloor & \mathbf{0}_3 & \mathbf{0}_3 & \mathbf{0}_3 & \mathbf{0}_3 \\
    \mathbf{0}_3 & \mathbf{0}_3 & \mathbf{0}_3 & \mathbf{0}_3 & \mathbf{0}_3 & \mathbf{0}_3 & \mathbf{0}_3 & \mathbf{0}_3 \\
    \mathbf{0}_3 & \mathbf{0}_3 & \mathbf{0}_3 & \mathbf{0}_3 & \mathbf{0}_3 & \mathbf{0}_3 & \mathbf{0}_3 & \mathbf{0}_3 \\
    \mathbf{0}_3 & \mathbf{0}_3 & \mathbf{0}_3 & \mathbf{0}_3 & \mathbf{0}_3 & \mathbf{I}_3 & \mathbf{0}_3 & \mathbf{0}_3 \\
    \mathbf{0}_3 & \mathbf{0}_3 & \mathbf{0}_3 & \mathbf{0}_3 & \mathbf{0}_3 & \mathbf{0}_3 & \mathbf{I}_3 & \mathbf{0}_3 \\
    \mathbf{0}_3 & \mathbf{0}_3 & \mathbf{0}_3 & \mathbf{0}_3 & \mathbf{0}_3 & \mathbf{0}_3 & \mathbf{0}_3 & \mathbf{I}_3
    \end{bmatrix}_{k+1|k+1}
\label{eq:Jv}
\end{equation}
\end{floatEq}

\begin{algorithm} [!t]
\caption{Robocentric Visual-Inertial Odometry}
\begin{algorithmic}
\STATE \textbf{Input}: Camera images, and IMU measurements
\STATE \textbf{Output}: 6DOF real-time pose estimates
\STATE \textbf{R-VIO}: Initialize the state and covariance with respect to the first local frame of reference, $\{R_0\}$ (i.e., $\{G\}$), when the first available IMU measurement(s) comes in. Then, every time when a camera image is available, do
\begin{itemize}
\item \textbf{Visual tracking}: extract features from the image, then perform Kanade-Lucas-Tomasi (KLT) tracking and outlier rejection. Record the inliers' tracking histories within the current sliding window.
\item \textbf{Propagation}: propagate state and covariance matrix using preintegration with all the IMU measurements starting from last image time. \\
$\Rightarrow$\; $\mathbf{x}_k\rightarrow\mathbf{x}_{k+1|k}$, and $\mathbf{P}_k\rightarrow\mathbf{P}_{k+1|k}$.
\item \textbf{Update}: for the feature (inlier) whose track is complete (i.e., lost track, or reach the maximum tracking length), compute the inverse-depth measurement model matrices, then \\
-- {\em EKF update}: use the features that have passed the Mahalanobis distance test for an EKF update. \\
-- {\em State augmentation}: augment state vector and covariance matrix using the updated relative pose estimates (state and covariance). \\
$\Rightarrow$\; $\mathbf{x}_{k+1|k}\rightarrow\mathbf{x}_{k+1|k+1}$, and $\mathbf{P}_{k+1|k}\rightarrow\mathbf{P}_{k+1|k+1}$.
\item \textbf{Composition}: shift the frame of reference to current IMU frame, update global state and covariance using the updated relative pose estimates, then reset the relative pose (state and covariance). \\
$\Rightarrow$\; $\mathbf{x}_{k+1|k+1}\rightarrow\mathbf{x}_{k+1}$, and $\mathbf{P}_{k+1|k+1}\rightarrow\mathbf{P}_{k+1}$.
\end{itemize}
\end{algorithmic}
\label{ag:1}
\end{algorithm}

\subsection{Initialization}
\label{sec:init}

It is important to point out that in the proposed robocentric formulation, the filter initialization is very simple, because the states are simply relative to a local frame of reference and typically start from zero {\em without} the need to align the initial pose with a fixed global frame. In particular, in our implementation, (i) the initial global pose and IMU relative pose are both set to $\{\bar{q}_0,\mathbf{0}_{3\times1}\}$, (ii) the initial local gravity is the average of first available accelerometer measurement(s) before moving, and (iii) the initial value of acceleration bias is obtained by removing the gravity effects while the initial gyroscope bias is the average of the corresponding stationary measurements. Similarly, the corresponding uncertainties for the poses are set to zero, while for the local gravity and biases are set to be:
$\boldsymbol{\Sigma}_g=\Delta{T}\sigma_a^2\mathbf{I}_3$, $\boldsymbol{\Sigma}_{b_g}=\Delta{T}\sigma_{wg}^2\mathbf{I}_3$, and $\boldsymbol{\Sigma}_{b_a}=\Delta{T}\sigma_{wa}^2\mathbf{I}_3$, where $\Delta{T}$ is the time length of initialization. In summary, the main procedures of the proposed R-VIO are outlined in Algorithm~\ref{ag:1}.

\section{Observability analysis}

Observability of the system reveals whether the information provided by the measurements is sufficient to estimate the state without ambiguities. In this section, we examine the observability properties of the proposed R-VIO linearized system in the case of that a single landmark is observed by a mobile sensor platform performing arbitrary motions, while the conclusion of analysis can be generalized to the case of multiple landmarks. Note that, a direct analysis of the observability properties of R-VIO could be cumbersome due to the feature marginalization (see \eqref{eq:ol}), thus we perform the observability analysis using an EKF-SLAM model which has the same observability properties as an EKF-VIO model provided the same linearization points used, which has been shown as a common practice in the VINS literature (see \cite{li2013high,guo2013icra,hesch2014ijrr,hesch2014consistency}). 

To this end, the state vector at time-step $k$ includes a single landmark $L$:
\begin{equation}
    \mathbf{x}_k =
    \begin{bmatrix}
    {^{R_k}}\mathbf{x}_k^\top & {^{R_k}}\mathbf{p}_L^\top
    \end{bmatrix}^\top
    \label{eq:xobs}
\end{equation}
where ${^{R_k}}\mathbf{p}_L$ is the position of landmark with respect to the current local frame of reference, $\{R_k\}$. The measurement model~\eqref{eq:zm} (or the inverse-depth model \eqref{eq:zinv}) is used. The observability matrix is computed as~\cite{chen1990local}:
\begin{equation}
    \mathbf{M} =
    \begin{bmatrix}
    \mathbf{H}_k \\
    \vdots \\
    \mathbf{H}_\ell\boldsymbol{\Psi}_{\ell,k} \\
    \vdots \\
    \mathbf{H}_{k+m}\boldsymbol{\Psi}_{k+m,k}
    \end{bmatrix}
    \label{eq:obsM}
\end{equation}
where $\boldsymbol{\Psi}_{\ell,k}$ is the state transition matrix from time-step $k$ to $\ell$, and $\mathbf{H}_\ell$ is the measurement Jacobian corresponding to the observation(s) at time-step $\ell$. Each row is evaluated at ${^{R_k}}\hat{\mathbf{p}}_L$ and ${^{R_k}}\hat{\mathbf{x}}_i$, $i=k,\ldots,\ell,\ldots,k+m$. The nullspace of $\mathbf{M}$ describes the directions of the state space, in which no information is provided by the measurements, i.e., the unobservable state subspace. It should be noted that since the proposed robocentric EKF includes three steps: propagation, update, and composition, and the composition step changes the local frame of reference, we analyze the observability for a complete cycle of: (i) propagation and update, and (ii) composition. We analytically prove that the proposed R-VIO linearized system has a constant unobservable subspace, and dose {\em not} undergo the observability mismatch issue that has been shown to be the main cause of inconsistency~\cite{huang2010observability,li2013high,hesch2014ijrr,hesch2014consistency}, thus improving estimation performance.

\subsubsection{\bf Analytic error-state transition matrix}

For theoretical analysis, the analytic form error-state transition matrix is computed:
\begin{equation}
    \boldsymbol{\Psi}(\ell,k) =
    \begin{bmatrix}
    \boldsymbol{\Phi}(\ell,k) & \mathbf{0}_{24\times3} \\
    \mathbf{0}_{3\times24} & \mathbf{I}_3
    \end{bmatrix}
    \label{eq:psi}
\end{equation}
where, instead of \eqref{eq:phid}, $\boldsymbol{\Phi}(\ell,k)$ is obtained by integrating the following differential equation over the time interval $[t_k,t_\ell]$:
\begin{equation}
    \dot{\boldsymbol{\Phi}}(\ell,k) = \mathbf{F}\boldsymbol{\Phi}(\ell,k)
    \label{eq:phidot}
\end{equation}
with initial condition $\boldsymbol{\Phi}(k,k)=\mathbf{I}_{24}$. 
The closed form results can be found in the following, while the interested readers are referred to our companion technical report for detailed derivations~\cite{supp}:
\begin{align}
    &\boldsymbol{\Phi}(\ell,k) =
    \begin{bmatrix}
    \mathbf{I}_3& \mathbf{0}_3& \mathbf{0}_3& \mathbf{0}_3& \mathbf{0}_3& \mathbf{0}_3& \mathbf{0}_3& \mathbf{0}_3 \\
    \mathbf{0}_3& \mathbf{I}_3& \mathbf{0}_3& \mathbf{0}_3& \mathbf{0}_3& \mathbf{0}_3& \mathbf{0}_3& \mathbf{0}_3 \\
    \mathbf{0}_3& \mathbf{0}_3& \mathbf{I}_3& \mathbf{0}_3& \mathbf{0}_3& \mathbf{0}_3& \mathbf{0}_3& \mathbf{0}_3 \\
    \mathbf{0}_3& \mathbf{0}_3& \mathbf{0}_3&
    \boldsymbol{\Phi}_{44}& \mathbf{0}_3& \mathbf{0}_3& \boldsymbol{\Phi}_{47}& \mathbf{0}_3 \\
    \mathbf{0}_3& \mathbf{0}_3& \boldsymbol{\Phi}_{53}&
    \boldsymbol{\Phi}_{54}& \mathbf{I}_3& \boldsymbol{\Phi}_{56}& \boldsymbol{\Phi}_{57}& \boldsymbol{\Phi}_{58} \\
    \mathbf{0}_3& \mathbf{0}_3& \boldsymbol{\Phi}_{63}&
    \boldsymbol{\Phi}_{64}& \mathbf{0}_3& \boldsymbol{\Phi}_{66}& \boldsymbol{\Phi}_{67}& \boldsymbol{\Phi}_{68} \\
    \mathbf{0}_3& \mathbf{0}_3& \mathbf{0}_3& \mathbf{0}_3& \mathbf{0}_3& \mathbf{0}_3& \mathbf{I}_3& \mathbf{0}_3 \\
    \mathbf{0}_3& \mathbf{0}_3& \mathbf{0}_3& \mathbf{0}_3& \mathbf{0}_3& \mathbf{0}_3& \mathbf{0}_3& \mathbf{I}_3
    \end{bmatrix} \nonumber \\
    &\boldsymbol{\Phi}_{44}(\ell,k) =
    {^\ell_k}\mathbf{C}_{\hat{\bar{q}}} \label{eq:phi_first} \\
    &\boldsymbol{\Phi}_{47}(\ell,k) = -{^\ell_k}\mathbf{C}_{\hat{\bar{q}}}\int_{t_k}^{t_\ell}{^\tau_k}\mathbf{C}_{\hat{\bar{q}}}^\top\;{d\tau} \\
    &\boldsymbol{\Phi}_{53}(\ell,k) = -\frac{1}{2}\mathbf{I}_3\Delta{t}_{k,\ell}^2 \\
    &\boldsymbol{\Phi}_{54}(\ell,k) = -\lfloor\big({^{R_k}}\hat{\mathbf{p}}_{I_\ell}+\frac{1}{2}{^{R_k}}\hat{\mathbf{g}}\Delta{t}_{k,\ell}^2\big)\times\rfloor \\
    &\boldsymbol{\Phi}_{56}(\ell,k) = \mathbf{I}_3\Delta{t}_{k,\ell} \\
    &\boldsymbol{\Phi}_{57}(\ell,k) = \int_{t_k}^{t_\ell}\lfloor{^\tau_k}\mathbf{C}_{\hat{\bar{q}}}^\top\hat{\mathbf{v}}_{I_\tau}\times\rfloor\int_{t_k}^{\tau}{^\mu_k}\mathbf{C}_{\hat{\bar{q}}}^\top\;{d\mu}{d\tau} \nonumber\\
    &\qquad\qquad\quad+\lfloor{^{R_k}}\hat{\mathbf{g}}\times\rfloor\int_{t_k}^{t_\ell}\int_{t_k}^{\tau}\int_{t_k}^{\mu}{^\lambda_k}\mathbf{C}_{\hat{\bar{q}}}^\top\;{d\lambda}{d\mu}{d\tau} \nonumber \\
    &\qquad\qquad\quad-\int_{t_k}^{t_\ell}\int_{t_k}^{\tau}{^\mu_k}\mathbf{C}_{\hat{\bar{q}}}^\top\lfloor\hat{\mathbf{v}}_{I_\mu}\times\rfloor\;{d\mu}{d\tau} \\
    &\boldsymbol{\Phi}_{58}(\ell,k) = -\int_{t_k}^{t_\ell}\int_{t_k}^{\tau}{^\mu_k}\mathbf{C}_{\hat{\bar{q}}}^\top\;{d\mu}{d\tau} \\
    &\boldsymbol{\Phi}_{63}(\ell,k) = -{^\ell_k}\mathbf{C}_{\hat{\bar{q}}}\Delta{t}_{k,\ell} \\
    &\boldsymbol{\Phi}_{64}(\ell,k) = -{^\ell_k}\mathbf{C}_{\hat{\bar{q}}}\lfloor{^{R_k}}\hat{\mathbf{g}}\times\rfloor\Delta{t}_{k,\ell} \\
    &\boldsymbol{\Phi}_{66}(\ell,k) = \boldsymbol{\Phi}_{44}(\ell,k) \\
    &\boldsymbol{\Phi}_{67}(\ell,k) = {^\ell_k}\mathbf{C}_{\hat{\bar{q}}}\lfloor{^{R_k}}\hat{\mathbf{g}}\times\rfloor\int_{t_k}^{t_\ell}\int_{t_k}^{\tau}{^\mu_k}\mathbf{C}_{\hat{\bar{q}}}^\top\;{d\mu}{d\tau} \nonumber \\
    &\qquad\qquad\quad-\int_{t_k}^{t_\ell}{^\ell_\tau}\mathbf{C}_{\hat{\bar{q}}}\lfloor\hat{\mathbf{v}}_{I_\tau}\times\rfloor\;{d\tau} \\
    &\boldsymbol{\Phi}_{68}(\ell,k) = -{^\ell_k}\mathbf{C}_{\hat{\bar{q}}}\int_{t_k}^{t_\ell}{^\tau_k}\mathbf{C}_{\hat{\bar{q}}}^\top\;{d\tau}
    \label{eq:phi_end}
\end{align}
where $\Delta{t}_{k,\ell}=t_\ell-t_k$.

\subsubsection{\bf Measurement Jacobian} At time-step $\ell\in[t_k,t_{k+m}]$, the position estimate of landmark in $\{I_\ell\}$ can be expressed as:
\begin{equation}
    {^{I_\ell}}\hat{\mathbf{p}}_L = {^\ell_k}\mathbf{C}_{\hat{\bar{q}}}\big({^{R_k}}\hat{\mathbf{p}}_L-{^{R_k}}\hat{\mathbf{p}}_{I_\ell}\big)
\end{equation}
Based on \eqref{eq:zm}, the bearing-only measurement is given by:
\begin{equation}
    \mathbf{z}_\ell = \frac{1}{z}
    \begin{bmatrix}
    x \\ y
    \end{bmatrix}, \quad
    {^{I_\ell}}\mathbf{p}_L =
    \begin{bmatrix}
    x & y & z
    \end{bmatrix}^\top
    \label{eq:zl}
\end{equation}
Notice that for brevity of presentation, here we assume that the camera and IMU frames coincide. The corresponding measurement Jacobian is in the form:
\begin{align}
    \mathbf{H}_\ell &=
    \mathbf{H}_\text{p}{^\ell_k}\mathbf{C}_{\hat{\bar{q}}}
    \begin{bmatrix}
    \mathbf{0}_3 & \mathbf{0}_3 & \mathbf{0}_3 & \mathbf{H}_{\boldsymbol{\theta}_\ell} & -\mathbf{I}_3 & \mathbf{0}_{3\times9} \;\; \big\lvert \;\; \mathbf{I}_3
    \end{bmatrix} \nonumber \\
    \mathbf{H}_\text{p} &= \frac{1}{\hat{z}}
    \begin{bmatrix}
    1 & 0 & -\frac{\hat{x}}{\hat{z}} \\
    0 & 1 & -\frac{\hat{y}}{\hat{z}}
    \end{bmatrix}, \;\; 
    \mathbf{H}_{\boldsymbol{\theta}_\ell} = \lfloor\big({^{R_k}}\hat{\mathbf{p}}_L-{^{R_k}}\hat{\mathbf{p}}_{I_\ell}\big)\times\rfloor{^\ell_k}\mathbf{C}_{\hat{\bar{q}}}^\top
    \label{eq:Jobs}
\end{align}

\subsection{Observability of propagation and update}

Based on the above equations, we obtain the $\ell$-th block row, $\mathbf{M}_\ell$, of $\mathbf{M}$, as follows (see \eqref{eq:psi}, \eqref{eq:phi_first}-\eqref{eq:phi_end}, and \eqref{eq:Jobs}):
\begin{align}
    \mathbf{M}_\ell &= \mathbf{H}_\ell\boldsymbol{\Psi}_{\ell,k} \nonumber \\
    &=\boldsymbol{\Pi}
    \begin{bmatrix}
    \mathbf{0}_3 & \mathbf{0}_3 & \boldsymbol{\Gamma}_1 & \boldsymbol{\Gamma}_2 & -\mathbf{I}_3 & \boldsymbol{\Gamma}_3 & \boldsymbol{\Gamma}_4 & \boldsymbol{\Gamma}_5 \;\; \big\lvert \;\; \mathbf{I}_3
    \end{bmatrix} \nonumber
\end{align}
where
\begin{align}
    \boldsymbol{\Pi} &= \mathbf{H}_\text{p}{^\ell_k}\mathbf{C}_{\hat{\bar{q}}} \\
    \boldsymbol{\Gamma}_1 &= -\boldsymbol{\Phi}_{53} = \frac{1}{2}\mathbf{I}_3\Delta{t}_{k,\ell}^2 \\
    \boldsymbol{\Gamma}_2 &= \lfloor\big({^{R_k}}\hat{\mathbf{p}}_L-{^{R_k}}\hat{\mathbf{p}}_{I_\ell}\big)\times\rfloor{^\ell_k}\mathbf{C}_{\hat{\bar{q}}}^\top\boldsymbol{\Phi}_{44}-\boldsymbol{\Phi}_{54} \nonumber \\
    &= \lfloor{^{R_k}}\hat{\mathbf{p}}_L\times\rfloor+\frac{1}{2}\lfloor{^{R_k}}\hat{\mathbf{g}}\times\rfloor\Delta{t}_{k,\ell}^2 \\
    \boldsymbol{\Gamma}_3 &= -\boldsymbol{\Phi}_{56} = -\mathbf{I}_3\Delta{t}_{k,\ell} \\
    \boldsymbol{\Gamma}_4 &= \lfloor\big({^{R_k}}\hat{\mathbf{p}}_L-{^{R_k}}\hat{\mathbf{p}}_{I_\ell}\big)\times\rfloor{^\ell_k}\mathbf{C}_{\hat{\bar{q}}}^\top\boldsymbol{\Phi}_{47}-\boldsymbol{\Phi}_{57} \nonumber \\
    &= -\lfloor\big({^{R_k}}\hat{\mathbf{p}}_L-{^{R_k}}\hat{\mathbf{p}}_{I_\ell}\big)\times\rfloor\int_{t_k}^{t_\ell}{^\tau_k}\mathbf{C}_{\hat{\bar{q}}}^\top\;{d\tau}-\boldsymbol{\Phi}_{57} \\
    \boldsymbol{\Gamma}_5 &= -\boldsymbol{\Phi}_{58}
\end{align}
Note that for generic motion, i.e., $\boldsymbol{\omega}\neq\mathbf{0}_{3\times1}$ and $\mathbf{a}\neq\mathbf{0}_{3\times1}$, the values of $\boldsymbol{\Phi}_{57}$ and $\boldsymbol{\Phi}_{58}$ are time-varying, then $\boldsymbol{\Gamma}_4$ and $\boldsymbol{\Gamma}_5$ are linearly independent. Moreover, the value of $\Delta{t}_{k,\ell}$ is varying for different time intervals, then the stacked $\boldsymbol{\Gamma}_1$, $\boldsymbol{\Gamma}_2$, and $\boldsymbol{\Gamma}_3$ are linearly independent. Thus, the stacked $\boldsymbol{\Gamma}_1$, $\boldsymbol{\Gamma}_2$, $\boldsymbol{\Gamma}_3$, $\boldsymbol{\Gamma}_4$, and $\boldsymbol{\Gamma}_5$ are linearly independent. Based on that, we perform Gaussian elimination on $\mathbf{M}_\ell$ to facilitate the search for the nullspace:
\begin{align}
    \mathbf{M}_\ell &= \boldsymbol{\Pi}
    \begin{bmatrix}
    \mathbf{0}_3 & \mathbf{0}_3 & \boldsymbol{\Gamma}_1 & \boldsymbol{\Gamma}_2 & -\mathbf{I}_3 & \boldsymbol{\Gamma}_3 & \boldsymbol{\Gamma}_4 & \boldsymbol{\Gamma}_5 \;\; \big\lvert \;\; \mathbf{I}_3
    \end{bmatrix} \nonumber \\
    &\thicksim \boldsymbol{\Pi}
    \begin{bmatrix}
    \mathbf{0}_3 & \mathbf{0}_3 & \boldsymbol{\Gamma}_1 & \boldsymbol{\Gamma}_2 & -\mathbf{I}_3 & \boldsymbol{\Gamma}_3 & \boldsymbol{\Gamma}_4 & \boldsymbol{\Gamma}_5 \;\; \big\lvert \;\; \mathbf{0}_3
    \end{bmatrix} \nonumber
\end{align}
from which we can find that $\mathbf{M}_\ell$ is rank deficient by $9$, and accordingly the nullspace is of rank $9$. Specifically, $\forall{\ell}\geq{k}$, we can find that the nullspace of $\mathbf{M}$ consists of the following {\em nine} directions, as:
\begin{equation}
    \textbf{null}(\mathbf{M}) = \underset{\operatorname{col.}}{\operatorname{span}}
    \begin{bmatrix}
    \mathbf{I}_3 & \mathbf{0}_3 & \mathbf{0}_3 \\
    \mathbf{0}_3 & \mathbf{I}_3 & \mathbf{0}_3 \\
    \mathbf{0}_3 & \mathbf{0}_3 & \mathbf{0}_3 \\
    \mathbf{0}_3 & \mathbf{0}_3 & \mathbf{0}_3 \\
    \mathbf{0}_3 & \mathbf{0}_3 & \mathbf{I}_3 \\
    \mathbf{0}_3 & \mathbf{0}_3 & \mathbf{0}_3 \\
    \mathbf{0}_3 & \mathbf{0}_3 & \mathbf{0}_3 \\
    \mathbf{0}_3 & \mathbf{0}_3 & \mathbf{0}_3 \\
    \mathbf{0}_3 & \mathbf{0}_3 & \mathbf{I}_3
    \end{bmatrix}
    \label{eq:nullM}
\end{equation}
which may be interpreted as follows:
\begin{remark}
    The first 6 DOF correspond to the orientation (3) and position (3) of the global frame, while the last 3 DOF belong to the same translation (3) simultaneously applied to the sensor and landmark(s). This agrees with our intuition that relative IMU and camera measurements do not provide any global state information, which is analogous to the SLAM case~\cite{huang2010observability}.
\end{remark}

\subsection{Observability with composition}

After update at time-step $\ell$, the estimates of ${^{R_k}}\mathbf{x}_\ell$ and ${^{R_k}}\mathbf{p}_L$ are obtained, we have the following linear model from time-step $k$ to $\ell$, including the composition step, as:
\begin{equation}
    \tilde{\mathbf{x}}_\ell = \check{\mathbf{V}}_\ell\boldsymbol{\Psi}(\ell,k)\tilde{\mathbf{x}}_k = \check{\boldsymbol{\Psi}}(\ell,k)\tilde{\mathbf{x}}_k
\end{equation}
where
\begin{align}
    \check{\mathbf{V}}_\ell &=
    \begin{bmatrix}
    \mathbf{V}_\ell & \mathbf{0}_{24\times3} \\
    \mathbf{L}_\ell & \mathbf{N}_\ell
    \end{bmatrix} =
    \begin{bmatrix}
    \mathbf{V}_\ell & \mathbf{0}_{24\times3} \\
    \frac{\partial{\tilde{\mathbf{x}}_\ell}}{\partial{{^{R_k}}\tilde{\mathbf{x}}_\ell}} & \frac{\partial{\tilde{\mathbf{x}}_\ell}}{\partial{{^{R_k}}\tilde{\mathbf{p}}_L}}
    \end{bmatrix} \nonumber \\
    \mathbf{L}_\ell &=
    \begin{bmatrix}
    \mathbf{0}_3 & \mathbf{0}_3 & \mathbf{0}_3 &  \lfloor{^{R_\ell}}\hat{\mathbf{p}}_{L}\times\rfloor & -{^\ell_k}\mathbf{C}_{\hat{\bar{q}}} & \mathbf{0}_3 & \mathbf{0}_3 & \mathbf{0}_3
    \end{bmatrix}, \nonumber \\
    \mathbf{N}_\ell &= {^\ell_k}\mathbf{C}_{\hat{\bar{q}}}
\end{align}
For brevity of analysis, only the pertinent entries of $\check{\boldsymbol{\Psi}}(\ell,k)$ (see \eqref{eq:psicomp}) are shown in the following:
\newcounter{tempEquationCounter2} 
\newcounter{thisEquationNumber2}
\newenvironment{floatEq2}
{\setcounter{thisEquationNumber2}{\value{equation}}\addtocounter{equation}{1}
\begin{figure*}[!h]
\normalsize\setcounter{tempEquationCounter2}{\value{equation}}
\setcounter{equation}{\value{thisEquationNumber2}}
}
{\setcounter{equation}{\value{tempEquationCounter2}}
\hrulefill\vspace*{1pt}
\end{figure*}
}
\begin{floatEq}
\begin{equation}
    \check{\boldsymbol{\Psi}}(\ell,k) = \check{\mathbf{V}}_\ell\boldsymbol{\Psi}(\ell,k) =
    \begin{bmatrix}
    \check{\boldsymbol{\Psi}}_{11} & \mathbf{0}_3 & \mathbf{0}_3 & \check{\boldsymbol{\Psi}}_{14} & \mathbf{0}_3 & \mathbf{0}_3 & \check{\boldsymbol{\Psi}}_{17} & \mathbf{0}_3 & \mathbf{0}_3 \\
    \mathbf{0}_3 & \check{\boldsymbol{\Psi}}_{22} & \check{\boldsymbol{\Psi}}_{23} & \check{\boldsymbol{\Psi}}_{24} & \check{\boldsymbol{\Psi}}_{25} & \check{\boldsymbol{\Psi}}_{26} & \check{\boldsymbol{\Psi}}_{27} & \check{\boldsymbol{\Psi}}_{28} & \mathbf{0}_3 \\
    \mathbf{0}_3 & \mathbf{0}_3 & \check{\boldsymbol{\Psi}}_{33} & \check{\boldsymbol{\Psi}}_{34} & \mathbf{0}_3 & \mathbf{0}_3 & \check{\boldsymbol{\Psi}}_{37} & \mathbf{0}_3 & \mathbf{0}_3 \\
    \mathbf{0}_3 & \mathbf{0}_3 & \mathbf{0}_3 &
    \mathbf{0}_3 & \mathbf{0}_3 & \mathbf{0}_3 &
    \mathbf{0}_3 & \mathbf{0}_3 & \mathbf{0}_3 \\
    \mathbf{0}_3 & \mathbf{0}_3 & \mathbf{0}_3 &
    \mathbf{0}_3 & \mathbf{0}_3 & \mathbf{0}_3 &
    \mathbf{0}_3 & \mathbf{0}_3 & \mathbf{0}_3 \\
    \mathbf{0}_3 & \mathbf{0}_3 & \check{\boldsymbol{\Psi}}_{63} & \check{\boldsymbol{\Psi}}_{64} & \mathbf{0}_3 &
    \check{\boldsymbol{\Psi}}_{66} & \check{\boldsymbol{\Psi}}_{67} & \check{\boldsymbol{\Psi}}_{68} & \mathbf{0}_3 \\
    \mathbf{0}_3 & \mathbf{0}_3 & \mathbf{0}_3 & \mathbf{0}_3 & \mathbf{0}_3 &
    \mathbf{0}_3 & \mathbf{I}_3 & \mathbf{0}_3 & \mathbf{0}_3 \\
    \mathbf{0}_3 & \mathbf{0}_3 & \mathbf{0}_3 & \mathbf{0}_3 & \mathbf{0}_3 &
    \mathbf{0}_3 & \mathbf{0}_3 & \mathbf{I}_3 & \mathbf{0}_3 \\
    \mathbf{0}_3 & \mathbf{0}_3 & \check{\boldsymbol{\Psi}}_{93} & \check{\boldsymbol{\Psi}}_{94} & \check{\boldsymbol{\Psi}}_{95} & \check{\boldsymbol{\Psi}}_{96} & \check{\boldsymbol{\Psi}}_{97} & \check{\boldsymbol{\Psi}}_{98} & \check{\boldsymbol{\Psi}}_{99}
    \end{bmatrix}
    \label{eq:psicomp}
\end{equation}
\end{floatEq}
\begin{align}
    &\check{\boldsymbol{\Psi}}_{93} = -{^\ell_k}\mathbf{C}_{\hat{\bar{q}}}\boldsymbol{\Phi}_{53} \\
    &\check{\boldsymbol{\Psi}}_{94} = \lfloor{^{R_\ell}}\hat{\mathbf{p}}_L\times\rfloor\boldsymbol{\Phi}_{44}-{^\ell_k}\mathbf{C}_{\hat{\bar{q}}}\boldsymbol{\Phi}_{54} \\
    &\check{\boldsymbol{\Psi}}_{95} = -{^\ell_k}\mathbf{C}_{\hat{\bar{q}}} \\
    &\check{\boldsymbol{\Psi}}_{96} = -{^\ell_k}\mathbf{C}_{\hat{\bar{q}}}\boldsymbol{\Phi}_{56} \\
    &\check{\boldsymbol{\Psi}}_{97} = \lfloor{^{R_\ell}}\hat{\mathbf{p}}_L\times\rfloor\boldsymbol{\Phi}_{47}-{^\ell_k}\mathbf{C}_{\hat{\bar{q}}}\boldsymbol{\Phi}_{57} \\
    &\check{\boldsymbol{\Psi}}_{98} = -{^\ell_k}\mathbf{C}_{\hat{\bar{q}}}\boldsymbol{\Phi}_{58} \\
    &\check{\boldsymbol{\Psi}}_{99} = {^\ell_k}\mathbf{C}_{\hat{\bar{q}}}
\end{align}
Note that the measurement model of \eqref{eq:zl} becomes linear:
\begin{equation}
    \mathbf{z}_\ell = {^{R_\ell}}\mathbf{p}_L, \quad
    {^{R_\ell}}\mathbf{p}_L = {^\ell_k}\mathbf{C}_{\bar{q}}\big({^{R_k}}\mathbf{p}_L-{^{R_k}}\mathbf{p}_{I_\ell}\big)
\end{equation}
and the measurement Jacobian with respect to $\tilde{\mathbf{x}}_\ell$ is as:
\begin{equation}
    \check{\mathbf{H}}_\ell =
    \begin{bmatrix}
    \mathbf{0}_{3\times24} \;\; \big\lvert \;\; \mathbf{I}_3
    \end{bmatrix}
\end{equation}
Therefore, after composition we have the block row, $\mathbf{M}_\ell$, of $\mathbf{M}$ in the form of:
\begin{align}
    \mathbf{M}_\ell &=
    \check{\mathbf{H}}_\ell\check{\boldsymbol{\Psi}}_{\ell,k} \nonumber \\
    &=
    \begin{bmatrix}
    \mathbf{0}_3 & \mathbf{0}_3 & \check{\boldsymbol{\Psi}}_{93:94} & -{^\ell_k}\mathbf{C}_{\hat{\bar{q}}} & \check{\boldsymbol{\Psi}}_{96:98} \;\; \big\lvert \;\; {^\ell_k}\mathbf{C}_{\hat{\bar{q}}}
    \end{bmatrix} \nonumber
\end{align}
where for generic motion case, i.e., $\boldsymbol{\omega}\neq\mathbf{0}_{3\times1}$ and $\mathbf{a}\neq\mathbf{0}_{3\times1}$, $\check{\boldsymbol{\Psi}}_{93}$, $\check{\boldsymbol{\Psi}}_{94}$, $\check{\boldsymbol{\Psi}}_{96}$, $\check{\boldsymbol{\Psi}}_{97}$, and $\check{\boldsymbol{\Psi}}_{98}$ are linearly independent, and obviously the same nullspace as that of the propagation and update can be obtained (see \eqref{eq:nullM}).

\begin{remark}
    In the proposed robocentric model, changing local frame of reference by composition does not alter the unobservable subspace.
\end{remark}

Thus far, we have shown that the proposed robocentric model has a {\em constant} unobservable subspace, i.e., independent of the linearization points. This not only guarantees that the system has correct unobservable dimensions as~\cite{huang2010observability,li2013high,hesch2014ijrr,hesch2014consistency}, but also the desired unobservable directions, thus being expected to improve estimation consistency.

\subsection{Observability under special motions}

Depending on the motion undertaken, the system observability properties might change in some degenerate cases. Identifying and understanding such special motions is essential for improving the VINS performance, especially in practice. The most commonly seen case is the planar motion (where usually the translation is only excited in the $x$-$y$ plane, and the rotation is only about the $z$-axis) and the recent analysis on world-centric VINS~\cite{wu2017vins} has pointed out that in this type of motion two more unobservable directions emerge: (i) the global orientation, and (ii) the scale. Note that, for the proposed robocentric VINS model the global orientation has already been shown to be unobservable (see \eqref{eq:nullM}), thus, in what follows we study in-depth the observability under special motions by focusing on the scale (un)observability.

\subsubsection{\bf Effect of scaling on VINS states}

We are first to understand the implications of an underlying scale factor applied to the state vector of the proposed robocentric system, which will form the basis for identifying the degenerate motions causing the special unobservable directions.

\begin{lem} \label{lem:obs1}
For the proposed robocentric system, given the true state, $\mathbf{x}$, and the underlying state, $\mathbf{x}'$, that are related through a scale factor, $s$, there exists the following relation between the corresponding error states (see \eqref{eq:xobs}):
\begin{equation}
    \begin{bmatrix}
    \delta\boldsymbol{\theta}_G \\
    {^{R_k}}\tilde{\mathbf{p}}_G \\
    {^{R_k}}\tilde{\mathbf{g}} \\
    \delta\boldsymbol{\theta}_I \\
    {^{R_k}}\tilde{\mathbf{p}}_I \\
    \tilde{\mathbf{v}}_I \\
    \tilde{\mathbf{b}}_g \\
    \tilde{\mathbf{b}}_a \\
    {^{R_k}}\tilde{\mathbf{p}}_L
    \end{bmatrix} =
    \begin{bmatrix}
    \delta\boldsymbol{\theta}'_G \\
    {^{R_k}}\tilde{\mathbf{p}}'_G \\
    {^{R_k}}\tilde{\mathbf{g}}' \\
    \delta\boldsymbol{\theta}'_I \\
    {^{R_k}}\tilde{\mathbf{p}}'_I \\
    \tilde{\mathbf{v}}'_I \\
    \tilde{\mathbf{b}}'_g \\
    \tilde{\mathbf{b}}'_a \\
    {^{R_k}}\tilde{\mathbf{p}}'_L
    \end{bmatrix} + (s-1)
    \begin{bmatrix}
    \mathbf{0}_{3\times1} \\
    {^{R_k}}\tilde{\mathbf{p}}'_G \\
    \mathbf{0}_{3\times1} \\
    \mathbf{0}_{3\times1} \\
    {^{R_k}}\tilde{\mathbf{p}}'_I \\
    \tilde{\mathbf{v}}'_I \\
    \mathbf{0}_{3\times1} \\
    -{^I}\mathbf{a}' \\
    {^{R_k}}\tilde{\mathbf{p}}'_L
    \end{bmatrix} \Rightarrow \nonumber
\end{equation}
\begin{equation}
    \tilde{\mathbf{x}} = \tilde{\mathbf{x}}'+(s-1)\mathbf{u}
    \label{eq:sx}
\end{equation}
\end{lem}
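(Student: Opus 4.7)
The plan is to verify the claim componentwise by understanding how each state variable transforms under a global scale factor $s$ applied to spatial distances. Conceptually, a scale ambiguity in VINS is possible because monocular camera measurements are bearings (invariant under a global rescaling of the scene) and inertial signals enter only through rotations and through accelerometers whose bias can compensate for rescalings. Accordingly, the first step is to classify each entry of the state vector~\eqref{eq:xobs} as either scale-invariant or scale-covariant: orientations (${^k_G}\bar{q}$, ${^\tau_k}\bar{q}$) and the gyroscope bias $\mathbf{b}_g$ are invariant since rotations and angular rates do not see length; the gravity vector ${^{R_k}}\mathbf{g}$ is a physical quantity set by gravitational acceleration and is therefore also invariant; while positions (${^{R_k}}\mathbf{p}_G$, ${^{R_k}}\mathbf{p}_I$, ${^{R_k}}\mathbf{p}_L$) and the velocity $\mathbf{v}_I$ transform by ${^{R_k}}\mathbf{p} = s\,{^{R_k}}\mathbf{p}'$ and $\mathbf{v}_I = s\mathbf{v}'_I$.

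The subtle component is the accelerometer bias, which I would handle next using the IMU measurement model~\eqref{eq:am}. Since $\mathbf{a}_m$ is a physical measurement independent of how we choose to name distances, and since the linear acceleration ${^I}\mathbf{a}$ is a second time-derivative of position and hence scales as ${^I}\mathbf{a} = s\,{^I}\mathbf{a}'$, applying $\mathbf{a}_m = {^I}\mathbf{a} + {^I}\mathbf{g} + \mathbf{b}_a$ to both parameterizations and equating yields $\mathbf{b}_a - \mathbf{b}'_a = -(s-1){^I}\mathbf{a}'$; in other words, the accelerometer bias absorbs the mismatch between the two rescaled acceleration interpretations of the same IMU measurement. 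At this point I would form the error-state difference $\tilde{\mathbf{x}} - \tilde{\mathbf{x}}' = \mathbf{x} - \mathbf{x}'$ entry by entry, using a common estimate $\hat{\mathbf{x}}$: zero for orientations, gravity, and gyro bias; $(s-1)$ times the underlying position/velocity for the spatial components (identified with the corresponding $\tilde{\mathbf{p}}'$ and $\tilde{\mathbf{v}}'$ to first order about the nominal trajectory); and $-(s-1){^I}\mathbf{a}'$ for the accelerometer bias. Stacking these into a single vector recovers $(s-1)\mathbf{u}$ exactly as stated in~\eqref{eq:sx}.

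The main obstacle is the accelerometer-bias calculation and its sign, since one has to account for the rescaling of ${^I}\mathbf{a}$ while simultaneously noting that $\mathbf{a}_m$ and ${^I}\mathbf{g}$ do not rescale; a sign error here would produce the wrong unobservable direction in the subsequent analysis. A secondary care point is reconciling the state-level scale relation, which is literally $\mathbf{p} = s\mathbf{p}'$, with the error-state identity~\eqref{eq:sx} which is read as a tangent-vector statement at $s=1$; this is handled cleanly by keeping only first-order terms in $(s-1)$ about the underlying trajectory, under which $\mathbf{p}'$ and $\tilde{\mathbf{p}}'$ may be identified along the scale direction. Once these two points are settled, the lemma follows by straightforward bookkeeping.
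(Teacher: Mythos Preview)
Your proposal is correct and follows essentially the same route as the paper's proof in Appendix~B: classify each state component as scale-invariant (rotations, gravity, gyro bias) or scale-covariant (positions, velocity), then use the accelerometer model~\eqref{eq:am} with $\mathbf{a}_m$ and ${^I}\mathbf{g}$ held fixed to deduce $\mathbf{b}_a=\mathbf{b}'_a-(s-1){^I}\mathbf{a}'$, and finally read off the error-state difference componentwise. Your sign bookkeeping on the bias term matches the paper, and you correctly flag the identification of $\mathbf{p}'$ with $\tilde{\mathbf{p}}'$ in $\mathbf{u}$ as a first-order/notational issue (the paper is equally informal on this point, and in fact evaluates $\mathbf{u}$ at the \emph{estimated} states when applying it in Lemma~2).

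The one ingredient the paper includes that you only mention in passing is the explicit verification that both the camera bearing measurement~\eqref{eq:zl} and the IMU measurements~\eqref{eq:wm}--\eqref{eq:am} are unchanged under the scale transformation; the paper works this out in Appendix~B as part of the lemma's proof, since it is what makes the scale direction a genuine ambiguity rather than just an algebraic relation. You allude to this in your opening paragraph, so adding the two-line check that $\mathbf{z}_I=\mathbf{z}'_I$ under perspective projection would bring your write-up fully in line with the paper's.
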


\begin{proof}
    \textup{See Appendix B}.
\end{proof}

\subsubsection{\bf Special motions for scale unobservability} 

It becomes clear from \eqref{eq:sx} that if the proposed robocentric VINS estimation is metrically scaled by a factor of $s$, then the error state (and hence the state) would be changed along the direction of $\mathbf{u}$ by a factor of $(s-1)$. However, as evident from the proof (see Appendix B), 
we cannot distinguish this scale ambiguity from the camera and IMU measurements, which implies that the direction of scale is {\em unobservable}. The following analysis further identifies the special motions that can cause this scale unobservability. 

\begin{lem}
For the proposed robocentric system, there exist two special motions which can cause scale unobservable: (i) no rotations, with:
\begin{equation}
    \Delta{t}_{k,\ell}\hat{\mathbf{v}}'_{I_\ell} = -\frac{1}{2}\Delta{t}_{k,\ell}^2{^\ell}\hat{\mathbf{a}}', \quad \forall \ell\geq k
    \label{eq:ucond}
\end{equation}
and (ii) constant local acceleration, with $\hat{\mathbf{v}}'_{I_\tau}={^\tau}\hat{\mathbf{a}}'\equiv\mathbf{0}$, $\forall \tau\in[t_k,t_\ell]$; that is, the system is stationary. 
\end{lem}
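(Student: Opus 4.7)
The plan is to verify directly that the scale direction $\mathbf{u}$ identified in Lemma~\ref{lem:obs1} lies in the nullspace of the observability matrix exactly under the two motion profiles listed. Scale unobservability at time-step $\ell$ is equivalent to $\mathbf{M}_\ell \mathbf{u} = \mathbf{0}$, and since the block $\mathbf{M}_\ell = \boldsymbol{\Pi}\begin{bmatrix} \mathbf{0}_3 & \mathbf{0}_3 & \boldsymbol{\Gamma}_1 & \boldsymbol{\Gamma}_2 & -\mathbf{I}_3 & \boldsymbol{\Gamma}_3 & \boldsymbol{\Gamma}_4 & \boldsymbol{\Gamma}_5 \mid \mathbf{I}_3 \end{bmatrix}$ is already computed in the preceding subsection, the proof reduces to a careful substitution followed by a short case analysis.

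First I would carry out the product $\mathbf{M}_\ell \mathbf{u}$. The zero entries of $\mathbf{u}$ in the $\delta\boldsymbol{\theta}_G$, $\tilde{\mathbf{g}}$, $\delta\boldsymbol{\theta}_I$, and $\tilde{\mathbf{b}}_g$ slots annihilate $\mathbf{0}_3$, $\boldsymbol{\Gamma}_1$, $\boldsymbol{\Gamma}_2$, and $\boldsymbol{\Gamma}_4$ respectively, leaving
\[
    \mathbf{M}_\ell \mathbf{u} = \boldsymbol{\Pi}\bigl({^{R_k}}\hat{\mathbf{p}}'_L - {^{R_k}}\hat{\mathbf{p}}'_{I_\ell} + \boldsymbol{\Gamma}_3\,\hat{\mathbf{v}}'_{I_\ell} - \boldsymbol{\Gamma}_5\,{^I}\hat{\mathbf{a}}'\bigr).
\]
The next step is the key cancellation: ${^\ell_k}\mathbf{C}_{\hat{\bar{q}}}\bigl({^{R_k}}\hat{\mathbf{p}}'_L - {^{R_k}}\hat{\mathbf{p}}'_{I_\ell}\bigr) = {^{I_\ell}}\hat{\mathbf{p}}'_L$ is parallel to the estimated landmark ray ${^{I_\ell}}\hat{\mathbf{p}}_L$, and by the perspective-projection structure visible in~\eqref{eq:Jobs} one has $\mathbf{H}_\text{p}\,{^{I_\ell}}\hat{\mathbf{p}}_L = \mathbf{0}$; the position-difference term therefore drops out, and after substituting $\boldsymbol{\Gamma}_3 = -\Delta t_{k,\ell}\mathbf{I}_3$ the residual reduces to $\mathbf{M}_\ell \mathbf{u} = -\boldsymbol{\Pi}\bigl(\Delta t_{k,\ell}\hat{\mathbf{v}}'_{I_\ell} + \boldsymbol{\Gamma}_5\,{^I}\hat{\mathbf{a}}'\bigr)$.

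From here the two motion profiles can be read off directly. In case~(i), when no rotations are performed, ${^\mu_k}\mathbf{C}_{\hat{\bar{q}}}\equiv\mathbf{I}_3$ throughout $[t_k,t_\ell]$, so the double integral defining $\boldsymbol{\Gamma}_5$ collapses to $\tfrac{1}{2}\Delta t_{k,\ell}^2\mathbf{I}_3$ and ${^I}\hat{\mathbf{a}}' = {^\ell}\hat{\mathbf{a}}'$; setting $\mathbf{M}_\ell \mathbf{u} = \mathbf{0}$ then yields exactly the balance condition $\Delta t_{k,\ell}\hat{\mathbf{v}}'_{I_\ell} = -\tfrac{1}{2}\Delta t_{k,\ell}^2\,{^\ell}\hat{\mathbf{a}}'$ stated in the lemma. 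In case~(ii), $\hat{\mathbf{v}}'_{I_\tau} \equiv \mathbf{0}$ and ${^\tau}\hat{\mathbf{a}}' \equiv \mathbf{0}$ on $[t_k,t_\ell]$ (i.e., the platform is stationary while possibly rotating in place), so both remaining terms vanish identically regardless of the rotation history, without any further condition on $\boldsymbol{\Gamma}_5$. I would additionally note that the composition step does not alter this conclusion, since the post-composition row $\check{\mathbf{H}}_\ell\check{\boldsymbol{\Psi}}_{\ell,k}$ inherits the same active columns (merely premultiplied by ${^\ell_k}\mathbf{C}_{\hat{\bar{q}}}$), so the identical cancellation applies.

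The main obstacle is not conceptual but purely algebraic bookkeeping — in particular, spotting the projection-kernel cancellation $\mathbf{H}_\text{p}\,{^{I_\ell}}\hat{\mathbf{p}}_L = \mathbf{0}$ that eliminates the position-difference term, without which the case analysis would be obscured. Once that cancellation is in hand, the separation into cases is immediate: case~(i) amounts to a closed-form reduction of $\boldsymbol{\Gamma}_5$ under the no-rotation assumption, whereas case~(ii) simply annihilates the coefficients multiplying $\boldsymbol{\Gamma}_3$ and $\boldsymbol{\Gamma}_5$.
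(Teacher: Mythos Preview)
Your proposal is correct and follows essentially the same route as the paper: form $\mathbf{M}'_\ell\mathbf{u}$, use the projection-kernel identity $\mathbf{H}_\text{p}\,{^{I_\ell}}\hat{\mathbf{p}}'_L=\mathbf{0}$ to kill the position-difference term, and then reduce the residual $\boldsymbol{\Gamma}_3\hat{\mathbf{v}}'_{I_\ell}-\boldsymbol{\Gamma}_5{^\ell}\hat{\mathbf{a}}'$ under the two motion hypotheses.

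The only noteworthy deviation is in case~(ii). You assume the full stationary condition $\hat{\mathbf{v}}'_{I_\tau}={^\tau}\hat{\mathbf{a}}'\equiv\mathbf{0}$ from the outset and observe that both terms vanish trivially. The paper instead starts from the weaker premise of \emph{constant local acceleration} (${^\tau}\hat{\mathbf{a}}'\equiv{^k}\hat{\mathbf{a}}'$), pulls ${^\ell}\hat{\mathbf{a}}'$ inside the double integral defining $\boldsymbol{\Gamma}_5$, recognises $\int\!\!\int{^\mu_k}\mathbf{C}_{\hat{\bar q}}^\top{^\mu}\hat{\mathbf{a}}'\,d\mu\,d\tau$ as ${^{R_k}}\hat{\mathbf{p}}'_{I_\ell}$, and obtains the reduced condition $-\Delta t_{k,\ell}\hat{\mathbf{v}}'_{I_\ell}-{^{R_k}}\hat{\mathbf{p}}'_{I_\ell}=\mathbf{0}$, from which the stationary conclusion is read off. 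Your shortcut suffices to establish the lemma as stated (sufficiency), while the paper's derivation additionally shows that, within the constant-local-acceleration class, stationarity is precisely what is forced. Either argument is acceptable here.
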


\begin{proof}
    \textup{See Appendix C}.
\end{proof}

As a final remark, it is clear from the above lemma that the scale unobservable direction does exist when: (i) \eqref{eq:ucond} holds (e.g., during the deceleration phase), or (ii) the sensor platform remains stationary. However, these two cases can be easily mitigated in practice. Specifically, in the case of  (i), as it holds true as $\Delta{t}_{k,\ell}\rightarrow0$, we can simply increase $\Delta{t}_{k,\ell}$ in practice to avoid the scale change. While in the case of (ii),  we will confront low parallax, but the inverse-depth measurement model used in the proposed R-VIO (see \eqref{eq:zinv}) will enable it only to exploit the rotation information from the measurements, thus holding the scale. It should be pointed out that, in contrast to the world-centric remedy~\cite{wu2017vins} where the wheel odometry measurements are fused, the proposed R-VIO does not need an additional sensor to address this scale issue, thus revealing the better adaptability and robustness.

\section{Simulation results}

In this section, we present Monte Carlo simulation results that verify the analysis provided in the preceding sections and illustrate the performance of the proposed R-VIO algorithm compared to two world-centric counterparts: (i) the standard (Std)-MSCKF~\cite{mourikis2007multi}, and (ii) the state-of-the-art state-transition observability constrained (STOC)-MSCKF~\cite{huang2014towards} that enforces correct observability to improve consistency. In particular, two metrics are used for evaluation: (a) the root mean squared error (RMSE) that provides a concise metric of the filter's accuracy, and (b) the normalized estimation error squared (NEES) which offers a standard criterion for evaluating the given filter's consistency~\cite{Bar-Shalom2001}. In order to make a fair comparison, we implemented all filters using the same parameters, such as the sliding-window size, and processing the same data in all 50 Monte Carlo trails that are generated at real MEMS sensor noise and bias levels (see Figure \ref{fig:sim-scene}).

The statistical results over 50 Monte Carlo trails are shown in Figure~\ref{fig:sim-rmse-nees}, and Table~\ref{tab:sim-rmse-nees} provides the average RMSE and NEES results for all the algorithms compared in this test, which clearly show that the proposed R-VIO significantly outperforms the standard MSCKF and the STOC-MSCKF in terms of both RMSE (accuracy) and NEES (consistency), attributed to the novel reformulation of the system. Note that, in Figure~\ref{fig:sim-rmse-nees} the orientation NEES of R-VIO has a jump at the beginning which is primarily due to the small covariance we used for initialization, while it can quickly recover and perform consistently only after a short period of time.

\begin{figure}[!t]
    \centering
    \includegraphics[width=.48\textwidth]{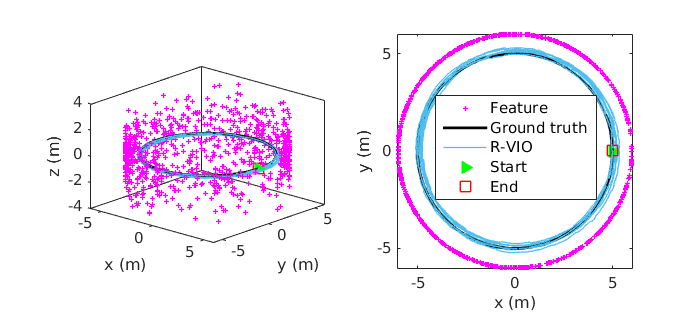}
    \caption{Simulation scenario: a camera/IMU pair moves along a circular path of radius 5m ({\bf black}) at an average speed of 1m/s. The camera with $45^\circ$ field of view observes point features ({\bf pink}) randomly distributed on a circumscribing cylinder of radius 6m. The standard deviation of image noise is set to 1.5 pixels. The IMU provides 3DOF angular velocities and linear accelerations which are generated with actual MEMS sensor's quality, i.e., $\sigma_g=1.122e^{-4}\sf{rad}/\sf{sec}/\sqrt{\sf{Hz}}$, $\sigma_{wg}=5.6323e^{-6}\sf{rad}/\sf{sec}^2/\sqrt{\sf{Hz}}$, $\sigma_a=5.0119e^{-4}\sf{m}/\sf{sec}^2/\sqrt{\sf{Hz}}$, and $\sigma_{wa}=3.9811e^{-5}\sf{m}/\sf{sec}^3/\sqrt{\sf{Hz}}$.}
    \label{fig:sim-scene}
\end{figure}

\begin{figure}
    \centering
    \includegraphics[width=.49\textwidth]{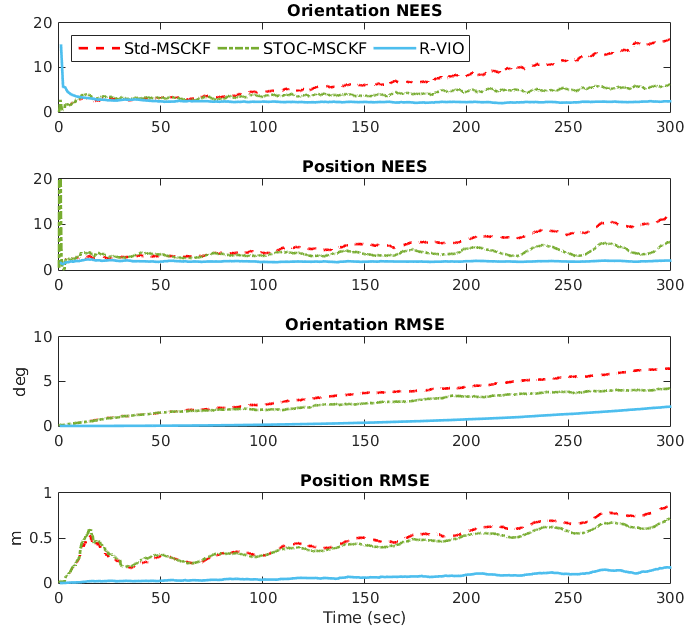}
    \caption{Simulation results: the average NEES and RMSE of orientation and position over 50 Monte-Carlo trials.}
    \label{fig:sim-rmse-nees}
\end{figure}

\begin{table}
\centering
\caption{Avg. RMSE and NEES corresponding to Fig. \ref{fig:sim-rmse-nees}.}
\begin{tabular}{|l|c|c|c|c|}
\hline
& Orien. & Pos. & Orien. & Pos. \\
& RMSE (deg) & RMSE (m) & NEES & NEES \\
\hline
Std-MSCKF & 3.470 & 0.477 & 7.048 & 5.810 \\
STOC-MSCKF & 2.523 & 0.430 & 4.096 & 3.793 \\
R-VIO & {\bf 0.681} & {\bf 0.071} & {\bf 2.414} & {\bf 1.906} \\
\hline
\end{tabular}
\label{tab:sim-rmse-nees}
\end{table}

\section{Experimental results}
\label{sec:expm}

We further experimentally validate the proposed R-VIO in both indoor and outdoor environments, using both the public benchmark dataset on micro aerial vehicle (MAV) 
and the data collected with our own sensor platforms, including the hand-held and urban driving datasets. As described in Algorithm~\ref{ag:1}, we implemented it with C++ multithread framework. In the {\em front end}, the visual tracking thread extracts features from the image using the Shi-Tomasi corner detector~\cite{shi1994good}, and tracks them between pairwise images using the Kanade-Lucas-Tomasi (KLT) algorithm~\cite{baker2004lucas}. In particular, to deal with the varying lighting conditions in practice, a preprocessing of Gaussian thresholding and box blurring was applied for each image before doing the KLT tracking. This effectively mitigates the sharp change of illumination and outlines the structures of environment even in the dark areas (see Figure~\ref{fig:frontend}), which is particularly helpful for the feature detection. In addition, to remove the outliers from the visual tracks, we realized the gyro-aided two-point RANSAC algorithm~\cite{troiani20142}. In the end, all the inliers' tracking histories are stored in a first-in-first-out (FIFO) data structure which can be efficiently queried during the estimation.

Once the visual tracking is done, the {\em back end} processes all the visual and inertial measurements using the proposed robocentric EKF. Especially, for the feature lost track we use all its measurements within the sliding window for an EKF update, while for the one reaching the maximum tracking length (e.g., the sliding-window size) we use its subset (e.g., 1/2) of measurements and maintain the rest for next update. 
All the tests run on a Core i7-4710MQ @ 2.5GHz laptop at {\em real time}.

\begin{figure}
    \centering
    \begin{subfigure}{0.48\textwidth}
        \centering
        \includegraphics[width=\textwidth]{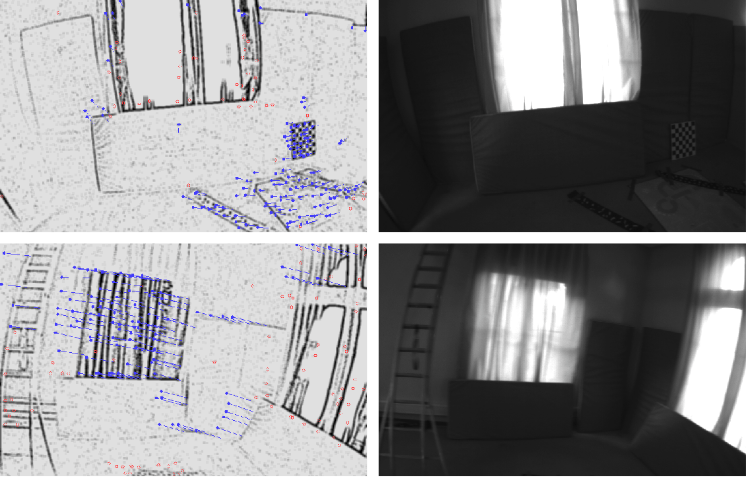}
        \caption{EuRoC dataset ({Vicon room}): V1\_03\_difficult.}
    \end{subfigure}
    \begin{subfigure}{0.48\textwidth}
        \centering
        \vspace{1pc}
        \includegraphics[width=\textwidth]{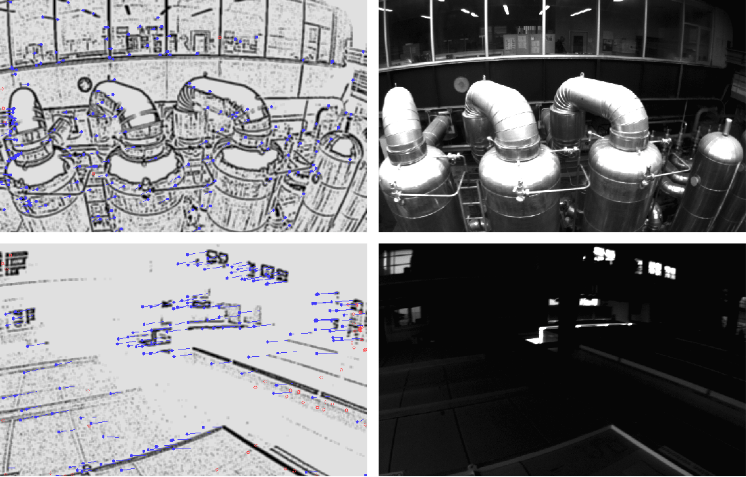}
        \caption{EuRoC dataset ({Machine hall}): MH\_05\_difficult.}
    \end{subfigure}
    \begin{subfigure}{0.48\textwidth}
        \centering
        \vspace{1pc}
        \includegraphics[width=\textwidth]{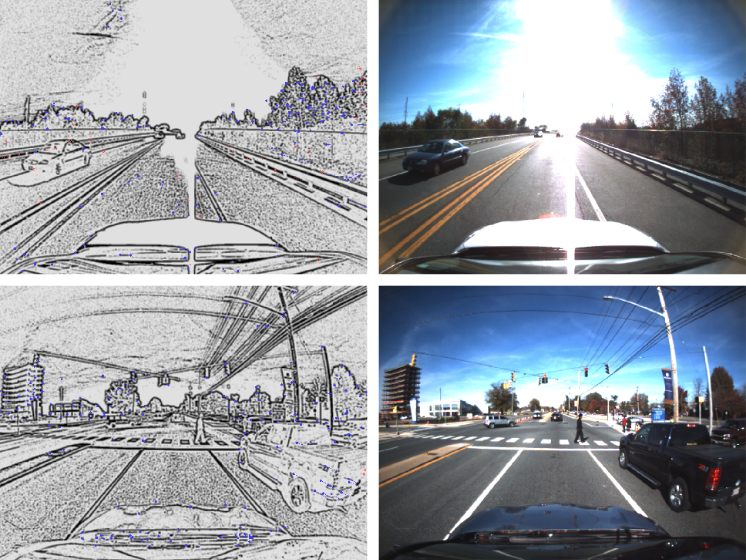}
        \caption{Urban Driving dataset.}
    \end{subfigure}
    \caption{Visual tracking: the processing results (left column) and the corresponding raw images (right column). The inliers ({\bf blue}) are tracked between pairwise images with the outliers ({\bf red}) being rejected by 2-point RANSAC. The performance of outlier rejection can be illustrated by the tracks of inliers (the blue lines) showing the trend of camera motion. It is important to note that the proposed visual tracking method is able to handle the (a) blurred, (b) dark, and (c) overexposed scenes of the real world.}
    \label{fig:frontend}
\end{figure}

\begin{figure}
    \centering
    \begin{subfigure}{0.235\textwidth}
        \centering
        \includegraphics[width=\textwidth,height=3.3cm]{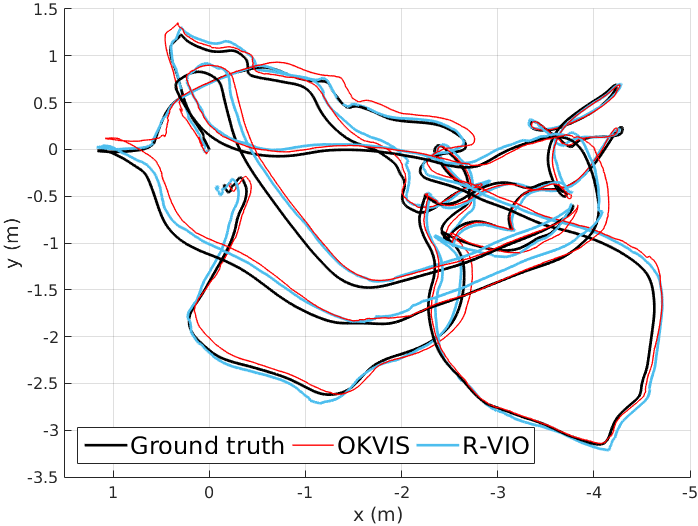}
        \caption{V1\_01\_easy}
    \end{subfigure}
    \begin{subfigure}{0.235\textwidth}
        \centering
        \includegraphics[width=\textwidth,height=3.3cm]{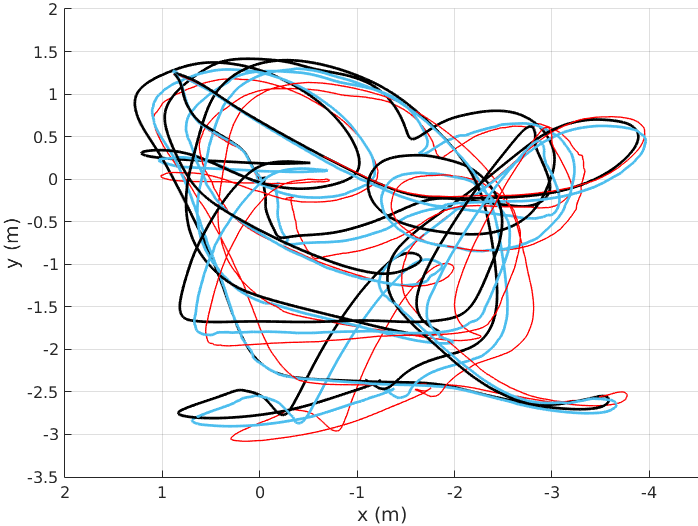}
        \caption{V1\_02\_medium}
    \end{subfigure}
    \begin{subfigure}{0.235\textwidth}
        \centering
        \vspace{0.5pc}
        \includegraphics[width=\textwidth,height=3.3cm]{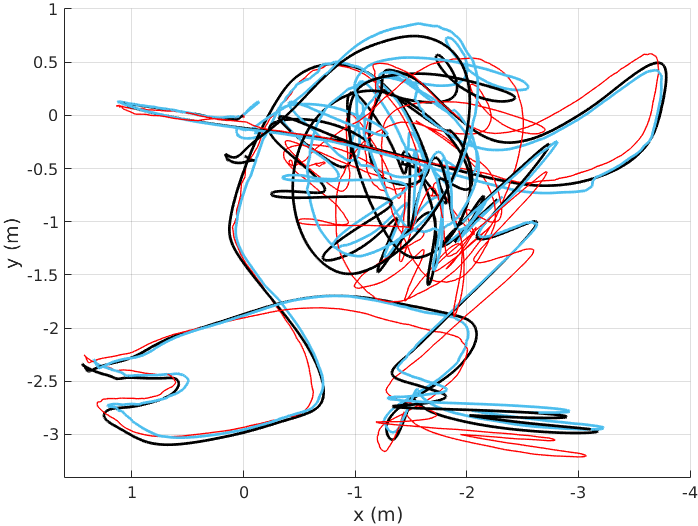}
        \caption{V1\_03\_difficult}
    \end{subfigure}
    \begin{subfigure}{0.235\textwidth}
        \centering
        \vspace{0.5pc}
        \includegraphics[width=\textwidth,height=3.3cm]{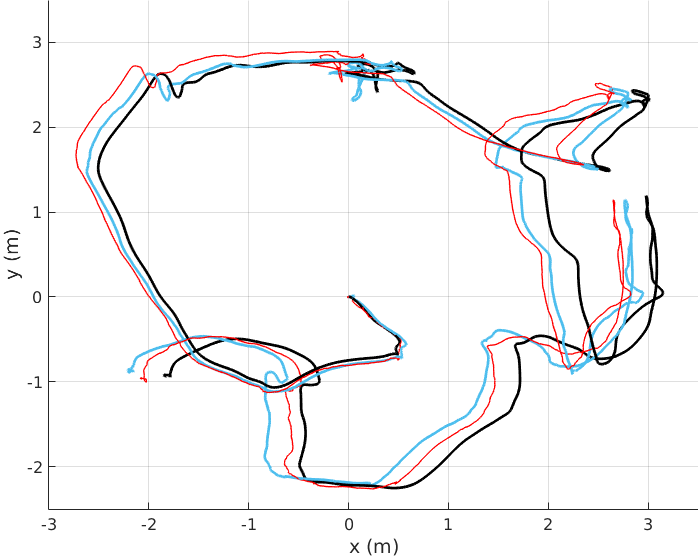}
        \caption{V2\_01\_easy}
    \end{subfigure}
    \begin{subfigure}{0.235\textwidth}
        \centering
        \vspace{0.5pc}
        \includegraphics[width=\textwidth,height=3.3cm]{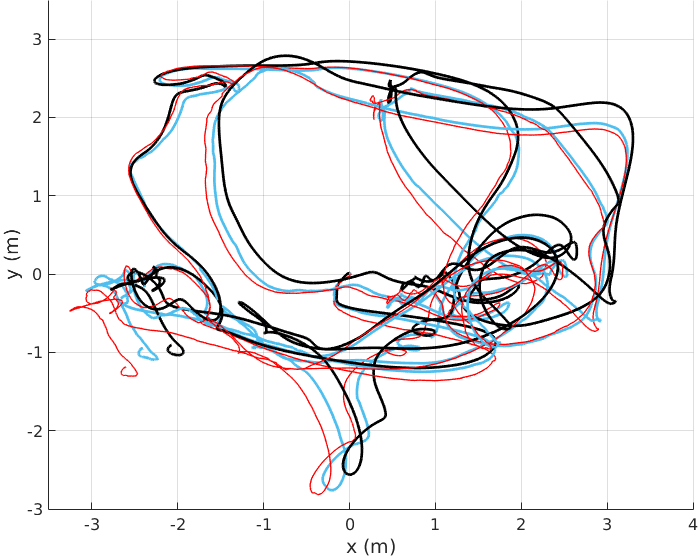}
        \caption{V2\_02\_medium}
    \end{subfigure}
    \begin{subfigure}{0.235\textwidth}
        \centering
        \vspace{0.5pc}
        \includegraphics[width=\textwidth,height=3.3cm]{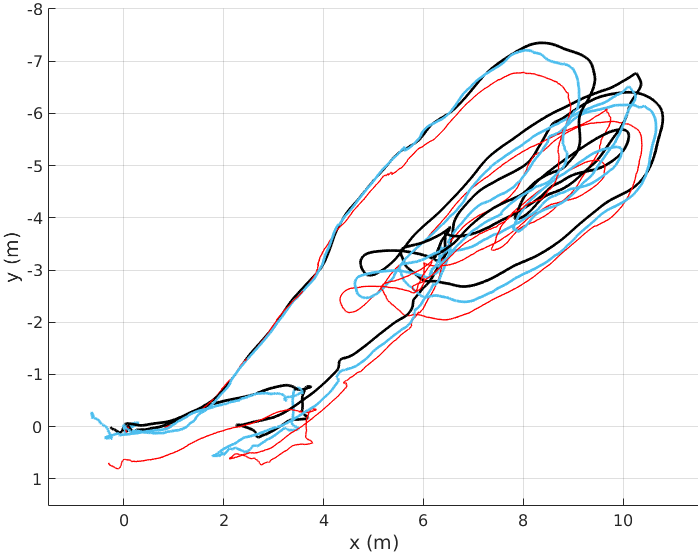}
        \caption{MH\_01\_easy}
    \end{subfigure}
    \begin{subfigure}{0.235\textwidth}
        \centering
        \vspace{0.5pc}
        \includegraphics[width=\textwidth,height=3.3cm]{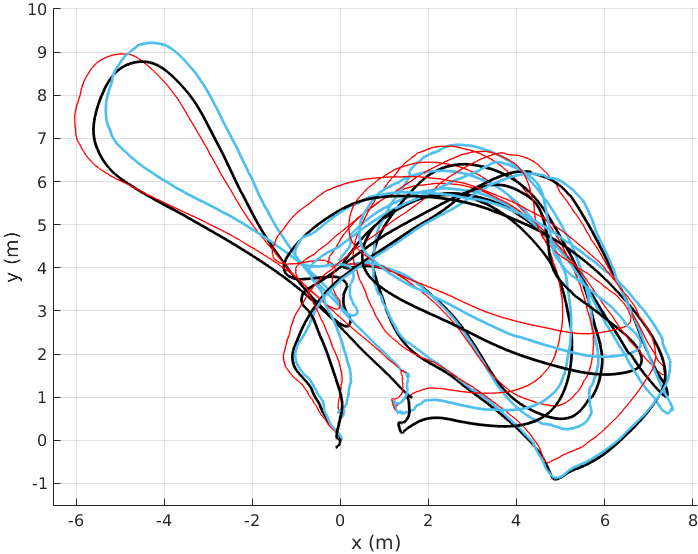}
        \caption{MH\_03\_medium}
    \end{subfigure}
    \begin{subfigure}{0.235\textwidth}
        \centering
        \vspace{0.5pc}
        \includegraphics[width=\textwidth,height=3.3cm]{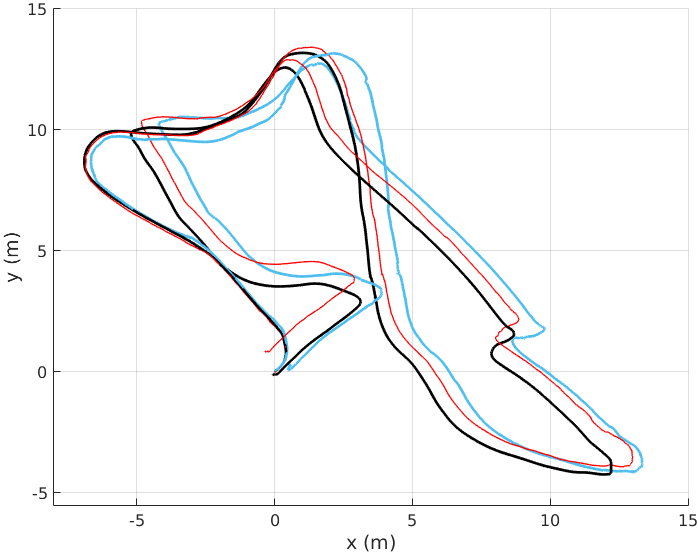}
        \caption{MH\_05\_difficult}
    \end{subfigure}
    \caption{Trajectory estimates in EuRoC dataset.}
    \label{fig:eurocres}
\end{figure}

\begin{table}[!b]
\centering
\caption{Estimation accuracy (RMSE) in EuRoC dataset.}
\begin{tabular}{|l|c|c|c|c|c|}
\cline{1-6}
& & \multicolumn{2}{|c|}{OKVIS} & \multicolumn{2}{|c|}{R-VIO} \\
\cline{2-6}
& Length & Orien. & Pos. & Orien. & Pos. \\
& (m) & (deg) & (m) & (deg) & (m) \\
\hline
\multicolumn{1}{|l|}{V1\_01\_easy} & 58.6 & 2.350 & 0.142 & {2.151} & {\bf 0.085} \\
\multicolumn{1}{|l|}{V1\_02\_medium} & 75.9 & 3.363 & 0.299 & {0.777} & {\bf 0.156} \\
\multicolumn{1}{|l|}{V1\_03\_difficult} & 79.0 & 3.586 & 0.265 & {0.729} & {\bf 0.137} \\
\multicolumn{1}{|l|}{V2\_01\_easy} & 36.5 & {0.651} & 0.311 & 1.014 & {\bf 0.216} \\
\multicolumn{1}{|l|}{V2\_02\_medium} & 83.2 & 2.986 & 0.341 & {1.214} & {\bf 0.313} \\
\multicolumn{1}{|l|}{V2\_03\_difficult} & 86.1 & 5.912 & {\bf 0.377} & {1.275} & 0.441 \\
\hline
\multicolumn{1}{|l|}{MH\_01\_easy} & 80.6 & {1.051} & 0.590 & 1.236 & {\bf 0.387} \\
\multicolumn{1}{|l|}{MH\_02\_easy} & 73.5 & 1.062 & {\bf 0.698} & {0.946} & 0.740 \\
\multicolumn{1}{|l|}{MH\_03\_medium} & 130.9 & 2.336 & 0.550 & {1.351} & {\bf 0.358} \\
\multicolumn{1}{|l|}{MH\_04\_difficult} & 91.7 & {0.286} & {\bf 0.431} & 3.525 & 1.037 \\
\multicolumn{1}{|l|}{MH\_05\_difficult} & 97.6 & {1.136} & {\bf 0.674} & 1.392 & 0.858 \\
\hline
\end{tabular}
\label{tab:euroc-rmse}
\end{table}

\subsection{EuRoC dataset}

We tested the proposed R-VIO on all of 11 sequences in EuRoC dataset~\cite{burri2016euroc}, in which a FireFly hex-rotor helicopter equipped with VI-sensor (an IMU @ 200Hz and dual cameras 752$\times$480 pixels @ 20Hz) was used for data collection. In this test, only the left camera images were used for vision inputs, and 200 features were uniformly extracted from each image. The sliding-window size was set up to 20 (i.e., about 1 second memory of the relative motion). We compared the proposed R-VIO against the OKVIS\footnote{https://github.com/ethz-asl/okvis}, one state-of-the-art world-centric keyframe-based visual-inertial SLAM system~\cite{leutenegger2015keyframe} performing nonlinear iterative optimization for estimation. The RMSE results after 6DOF pose alignment are shown in Table~\ref{tab:euroc-rmse}, and Figure~\ref{fig:eurocres} depicts the estimated trajectories in 8 representative sequences. It is important to note that the proposed R-VIO does not utilize any kind of map, while the OKVIS does. Nevertheless, in general, the R-VIO performs comparably to the OKVIS, and even {\em better} in most sequences (see Table~\ref{tab:euroc-rmse}).

\subsection{Hand-held dataset}

We also validated the proposed R-VIO both indoor and outdoor with one of our own sensor platforms (a MicroStrain 3DM-GX3-35 IMU @ 500Hz and a PointGrey Chameleon3 monocular camera 644$\times$482 pixels @ 30Hz) that was rigidly mounted onto the laptop. Both daytime and nighttime data were collected for the indoor test, where we travelled 150m at an average speed of 0.539m/s, covering two floors in a building (with white walls, variant illumination, and strong glare in the hallway, see Figure~\ref{fig:hand-indoor1}), then coming back to the start point; while the outdoor test used the data of a 360m loop recorded at an average speed of 1.216m/s (with uneven terrain and opportunistic moving objects, see Figure~\ref{fig:hand-outdoor1}). Due to the lack of the ground truth, here in order to illustrate the performance we overlay the estimated trajectories onto the floor plan and the map, respectively (see Figure~\ref{fig:hand-indoor2} and \ref{fig:hand-outdoor2}). The final position errors are 0.349\% (daytime) and 0.615\% (nighttime) over the distance travelled in the indoor test, and 1.173\% in the outdoor test.

\begin{figure}
    \centering
    \begin{subfigure}{.48\textwidth}
        \centering
        \includegraphics[width=\textwidth]{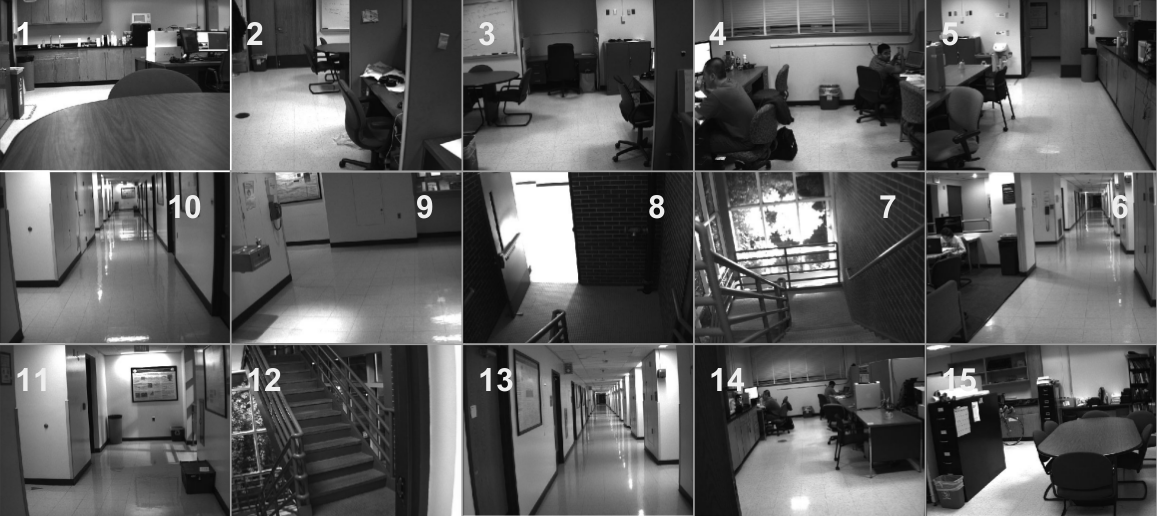}
        \caption{Snapshots during the indoor test (nighttime).}
        \label{fig:hand-indoor1}
    \end{subfigure}
    \begin{subfigure}{.455\textwidth}
        \centering
        \vspace{.5pc}
        \includegraphics[width=\textwidth]{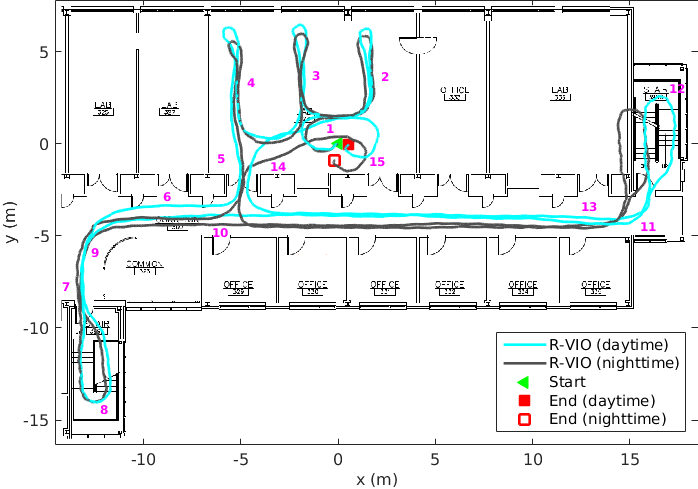}
        \caption{Indoor trajectory plotted over the floor plan.}
        \label{fig:hand-indoor2}
    \end{subfigure}
    \begin{subfigure}{0.48\textwidth}
        \centering
        \vspace{.5pc}
        \includegraphics[width=\textwidth]{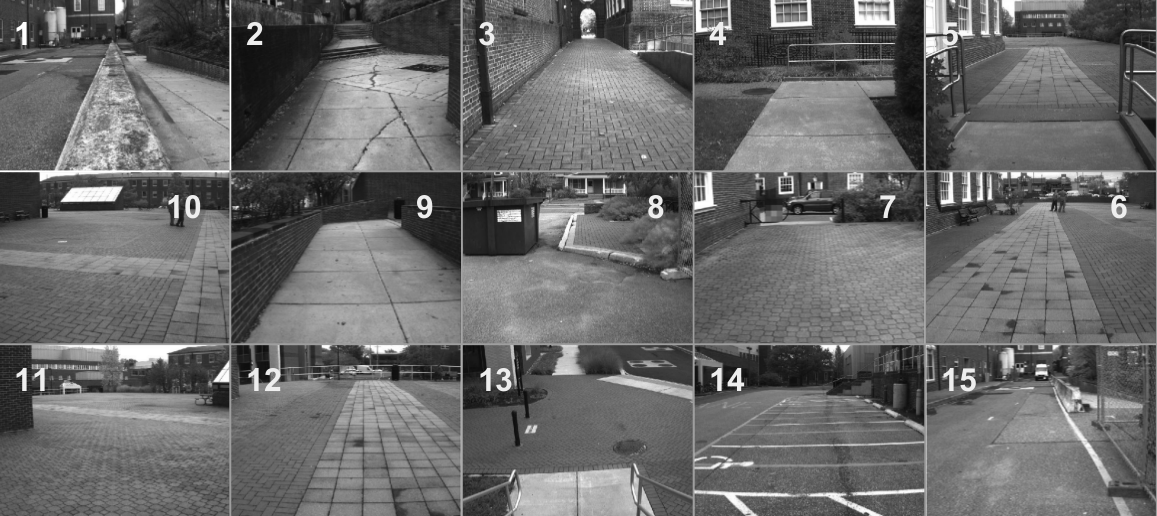}
        \caption{Snapshots during the outdoor test (daytime).}
        \label{fig:hand-outdoor1}
    \end{subfigure}
    \begin{subfigure}{0.455\textwidth}
        \centering
        \vspace{.5pc}
        \includegraphics[width=\textwidth]{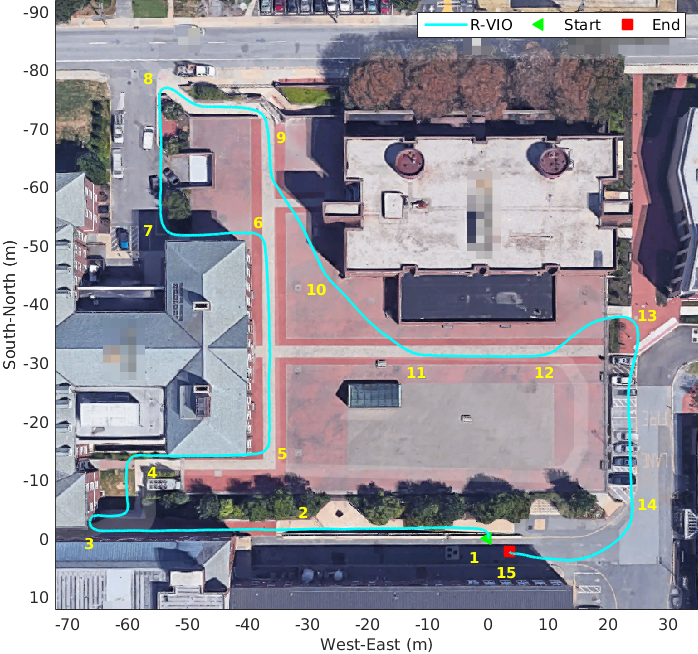}
        \caption{Outdoor trajectory plotted over a map.}
        \label{fig:hand-outdoor2}
    \end{subfigure}
    \caption{Results of Hand-held dataset.}
    \label{fig:handexpm}
\end{figure}

\subsection{Urban Driving dataset}

We further performed a road test using a car equipped with another sensor platform (an Xsens Mti-G INS/GNSS and a FLIR Bumblebee2 stereo pair 1024$\times$768 pixels @ 15Hz), and driving on the streets of Newark, DE. The IMU provided measurements at 400Hz, while the GPS signal was received at 4Hz as the (position) ground truth. 
Similarly, only the left camera images were used for vision inputs, with 200 features being uniformly extracted from each image. 
It is important to point out that the test is challenging primarily due to: 
(i) several traffic lights at which we must stop and wait for 15-25 seconds, 
(ii) frequent stop/yield signs before which we must decelerate or stop, 
(iii) dynamic scenes including the running vehicles and the pedestrians in vicinity, 
(iv) strong lens flare when driving facing the sun, and 
(v) high speeds of vehicle when driving in some areas (see Figure~\ref{fig:veh-scene}).
Because of these, the OKVIS was not able to provide reasonable localization results while the proposed R-VIO still performed well during the test.

\begin{figure*}
    \centering
    \includegraphics[width=0.7\textwidth]{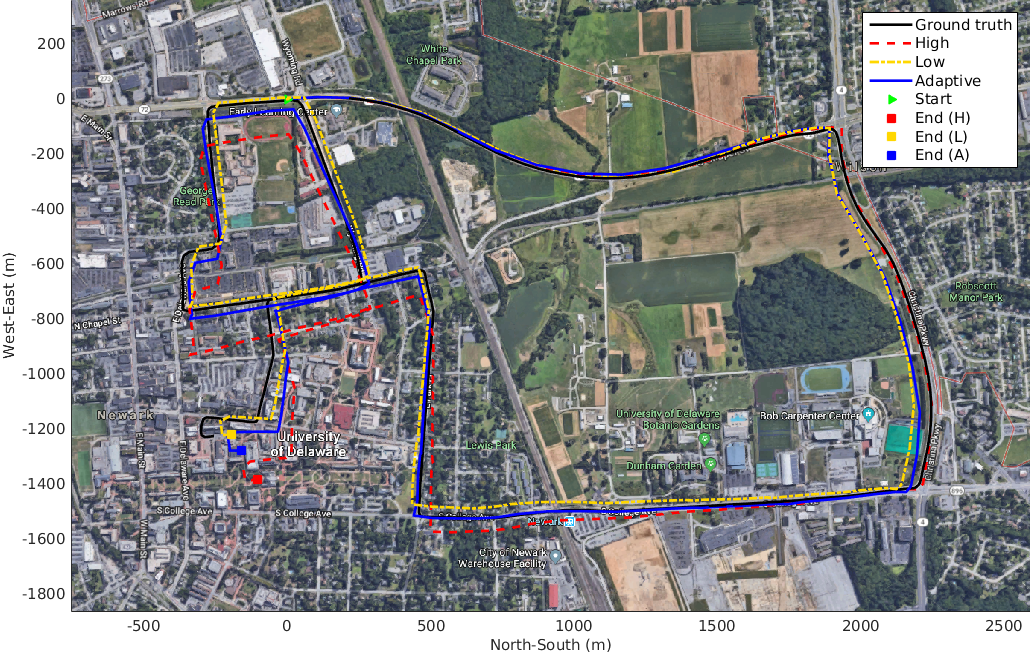}
    \caption{Trajectory estimates plotted over a map of Newark, DE. The initial position of vehicle is marked by a green triangle. The black solid line corresponds to the ground truth (GPS), the red dashed line to the result of {high} update rate, the yellow dash-dotted line to the result of {low} update rate, and the blue solid line to the result of {adaptive} update rate, with the end positions marked by the squares in the corresponding colors, respectively.}
    \label{fig:veh-traj}
\end{figure*}

\begin{figure}[!h]
    \centering
    \includegraphics[width=.48\textwidth]{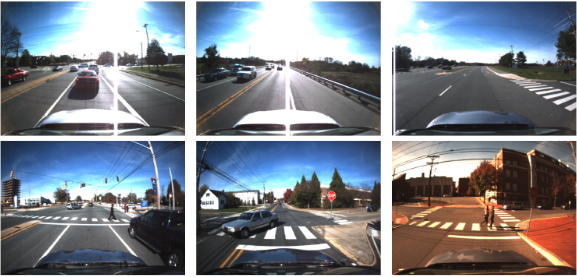}
    \caption{Snapshots during the urban driving test.}
    \label{fig:veh-scene}
\end{figure}

As what we discussed, both (i) and (ii) are the degenerate scenes which make the scale unobservable for the proposed VINS model. The usage of inverse-depth based measurement mode (see \eqref{eq:zinv}) solved the scale drift during the static phase, while for the deceleration phase we tested three update rates: high (15Hz), low (7Hz), and adaptive (switching between high and low). In particular, for the adaptive mode the R-VIO lowered down the update rate once recognizing deceleration phase from the changes of speed. The results are summarized in Table~\ref{tab:veh-rmse}, and Figure~\ref{fig:veh-traj} shows the estimated trajectories for all three update rates. We can find that using high update rate R-VIO captures high dynamic motion better than using low update rate, for instance, after the first right turn the vehicle sped up to 86km/h where the trajectory under high update rate fitted the ground truth better. While at the second right turn, a series of decelerations occurred due to the busy traffic at the intersection, as a consequence the scale issue biased the estimated trajectory afterwards. In contrast to that, with low update rate the R-VIO compensated the scale drift which makes entire trajectory closer to the ground truth. As a result, the proposed adaptive scheme is to take both the aforementioned advantages. Those performances are further confirmed by a test for which the difference of translation between consecutive poses of the estimates, $\Delta_{est}$, and that of the ground truth, $\Delta_{gt}$, are compared for every 10 seconds. The results referring to the estimated speeds are presented in Figure~\ref{fig:veh-scale}, from which we can find that the large differences (e.g., $>$5m) only appear when the sharp decelerations occur, while after the static phases the differences become much smaller. Among the three cases, the adaptive one performs the best with the average drift of 5.917m, while 8.274m and 5.992m for the high and low update rates, respectively.

\begin{figure}
    \centering
    \includegraphics[width=.49\textwidth]{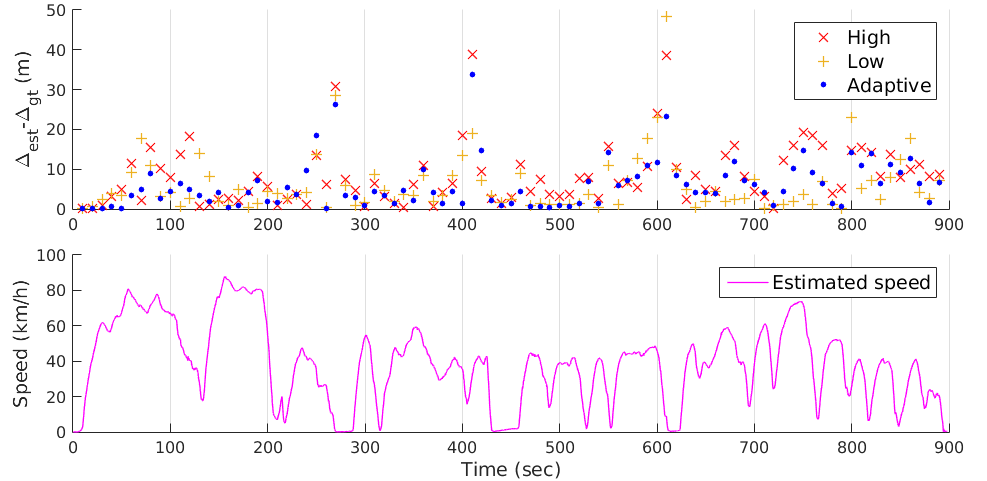}
    \caption{Relative translation error results vs. speeds of: high ({\bf red}), low ({\bf yellow}), and adaptive ({\bf blue}) update rates.}
    \label{fig:veh-scale}
\end{figure}

\begin{table}[!h]
\centering
\caption{Estimation accuracy (RMSE) in Urban Driving dataset of: high ($\sharp$1), low ($\sharp$2), and adaptive ($\sharp$3) update rates.}
\begin{tabular}{|l|c|c|c|c|c|}
\cline{1-6}
& Length / & \multicolumn{1}{|c|}{Max. speed} & \multicolumn{3}{|c|}{Avg. Position RMSE} \\
\cline{4-6}
& Duration & (km/h) & $x$ (m) & $y$ (m) & $z$ (m) \\
\hline
\multicolumn{1}{|l|}{$\sharp$1} & 9.8km / 15min & 85.9 & 30.934 & 68.561 & 8.418 \\
\hline
\multicolumn{1}{|l|}{$\sharp$2} & - / - & - & 33.984 & 15.883 & 10.426 \\
\hline
\multicolumn{1}{|l|}{$\sharp$3} & - / - & - & {\bf 24.222} & {\bf 18.901} & {\bf 7.689} \\
\hline
\end{tabular}
\label{tab:veh-rmse}
\end{table}

Note that, as the local gravity is jointly estimated, the $z$-axis drifts are much smaller than the $x$-$y$ position errors. The sliding-window size 20 was used in the test, and the average processing time of pipeline is 59.3 milliseconds per frame, including the 54.8 milliseconds spent on the visual tracking and feature management, and the other 4.5 millisecond on the robocentric EKF. 
For this challenging driving scenario, without using any kind of map, the proposed R-VIO achieves the average position RMSEs of: 0.77\% (high update rate), 0.40\% (low update rate), and 0.32\% (adaptive update rate) of the total distance travelled.


\section{Conclusion and future work}

In this paper, we have reformulated the VINS with respect to a moving local frame and developed a lightweight, high-precision, robocentric visual-inertial odometry algorithm, termed R-VIO. With this novel reformulation, we analytically show that with generic motion, the resulting VINS does not suffer from the observability mismatch issue encountered in the world-centric counterparts, and even in the degenerate motion case (planar motion) the observability issue can be easily compensated without using additional sensor information, thus offering better consistency, accuracy and robustness. 
Extensive Monte Carlo simulations and the real-world experiments using different sensor platforms and navigating in different environments were performed to thoroughly validate our theoretical analysis and show that the proposed R-VIO is versatile and robust to different types of motions and environments, and is capable of providing long-term, high-precision 3D motion tracking in real time.
In the future, we will integrate efficient loop closure and online mapping into the current robocentric system in order to bound localization errors,
as well as perform online calibration of  intrinsic and extrinsic sensor parameters to further improve  performance.

\section{Acknowledgement}
This work was partially supported by the University of Delaware College of Engineering, UD Cybersecurity Initiative, the Delaware NASA/EPSCoR Seed Grant, the NSF (IIS-1566129), and the DTRA (HDTRA1-16-1-0039). The authors would also like to thank Patrick Geneva for helping collect the urban driving data.

\bibliographystyle{IEEEtran}  
\bibliography{ref}

\section*{Appendix A: Bundle adjustment using inverse-depth parameterized landmark}

Assuming a single landmark, $L$, which has been observed from a set of consecutive robocentric frames in the sliding window, the set of corresponding camera frames is denoted by $\mathcal{C}$. To compute an inverse-depth estimate of $L$, i.e., $\boldsymbol{\lambda}=[\phi,\psi,\rho]^\top$, we use the proposed inverse-depth measurement model (see \eqref{eq:zinv}) ($i\in\mathcal{C}$):
\begin{align}
    \mathbf{z}_i &= \frac{1}{h_{i,3}(\phi,\psi,\rho)}
    \begin{bmatrix}
    h_{i,1}(\phi,\psi,\rho) \\ h_{i,2}(\phi,\psi,\rho)
    \end{bmatrix}+\mathbf{n}_i \nonumber \\
    &= \bar{\mathbf{z}}_i(\phi,\psi,\rho)+\mathbf{n}_i
\end{align}
where $\mathbf{n}_i\sim\mathcal{N}(\mathbf{0},\boldsymbol{\Lambda}_i)$ is the image noise, while the relative poses, $\mathbf{w}$, are assumed known.
Given the measurements $\mathbf{z}_i=(\frac{u_i}{f},\frac{v_i}{f})$, $i\in\{\mathcal{C}\backslash{1}\}$, we can formulate a bundle adjustment problem for solving $\boldsymbol{\lambda}$, as:
\begin{align}
    \boldsymbol{\lambda}^\ast &= \arg\min_{\boldsymbol{\lambda}}\sum_{i\in\{\mathcal{C}\backslash{1}\}}\big\lVert\bar{\mathbf{z}}_i(\boldsymbol{\lambda})-\mathbf{z}_i\big\rVert_{\boldsymbol{\Lambda}_i} \nonumber \\
    &=
    \arg\min_{\boldsymbol{\lambda}}\sum_{i\in\{\mathcal{C}\backslash{1}\}}\big\lVert\boldsymbol{\epsilon}_i(\boldsymbol{\lambda})\big\rVert_{\boldsymbol{\Lambda}_i}
\end{align}
where $\lVert\cdot\rVert_{\boldsymbol{\Lambda}}$ denotes the $\boldsymbol{\Lambda}$-weighted energy norm, and we define $\boldsymbol{\epsilon}_i$ as the residual associated to $\mathbf{z}_i$. This problem can be solved iteratively via Gauss-Newton approximation about the initial estimate of $\hat{\boldsymbol{\lambda}}$, as:
\begin{align}
    \delta{\boldsymbol{\lambda}}^\ast &=
    \arg\min_{\delta{\boldsymbol{\lambda}}}\sum_{i\in\{\mathcal{C}\backslash{1}\}}\big\lVert\boldsymbol{\epsilon}_i(\hat{\boldsymbol{\lambda}}+\delta{\boldsymbol{\lambda}})\big\rVert_{\boldsymbol{\Lambda}_i} \nonumber \\
    &\simeq
    \arg\min_{\delta{\boldsymbol{\lambda}}}\sum_{i\in\{\mathcal{C}\backslash{1}\}}\big\lVert\boldsymbol{\epsilon}_i(\hat{\boldsymbol{\lambda}})+\mathbf{H}_i\delta{\boldsymbol{\lambda}}\big\rVert_{\boldsymbol{\Lambda}_i}
    \label{eq:dlambda}
\end{align}
For the initial value of $\hat{\boldsymbol{\lambda}}$, we obtain $[\hat{\phi},\hat{\psi}]^\top$ by directly using the measurement of $L$, with the following equation:
\begin{equation}
    \begin{bmatrix}
    \hat{\phi} \\ \hat{\psi}
    \end{bmatrix} =
    \begin{bmatrix}
    \arctan\left(\frac{v_1}{f},\sqrt{(\frac{u_1}{f})^2+1}\right) \\
    \arctan\left(\frac{u_1}{f},1\right)
    \end{bmatrix}
\end{equation}
however, the initial value for $\hat{\rho}$ can be empirically chosen, 
for which we choose $0$ to put landmark at infinity first, and let it converge by performing iteration. The Jacobian of residual, $\mathbf{H}_i=\frac{\partial{\boldsymbol{\epsilon}_i(\hat{\boldsymbol{\lambda}}+\delta{\boldsymbol{\lambda}})}}{\partial{\delta{\boldsymbol{\lambda}}}}$, evaluated at $\hat{\boldsymbol{\lambda}}$ can be obtained following the chain rule, as:
\begin{equation}
    \mathbf{H}_i = \frac{\partial{\boldsymbol{\epsilon}_i}}{\partial{\mathbf{h}_i}}\frac{\partial{\mathbf{h}_i}}{\partial{\boldsymbol{\lambda}}}\frac{\partial{\boldsymbol{\lambda}}}{\partial{\delta{\boldsymbol{\lambda}}}} = \frac{\partial{\boldsymbol{\epsilon}_i}}{\partial{\mathbf{h}_i}}\frac{\partial{\mathbf{h}_i}}{\partial{\boldsymbol{\lambda}}} \nonumber
\end{equation}
where
\begin{align}
    \frac{\partial{\boldsymbol{\epsilon}_i}}{\partial{\mathbf{h}_i}} &= \frac{1}{\hat{h}_{i,3}}
    \begin{bmatrix}
    1 & 0 & -\frac{\hat{h}_{i,1}}{\hat{h}_{i,3}} \\
    0 & 1 & -\frac{\hat{h}_{i,2}}{\hat{h}_{i,3}}
    \end{bmatrix}, \nonumber \\
    \frac{\partial{\mathbf{h}_i}}{\partial{\boldsymbol{\lambda}}} &=
    \begin{bmatrix}
    \frac{\partial{\mathbf{h}_i}}{\partial{[\phi,\psi]^\top}} & \frac{\partial{\mathbf{h}_i}}{\partial{\rho}}
    \end{bmatrix} \nonumber \\
    &=
    \begin{bmatrix}
    {^i_1}\bar{\mathbf{C}}_{\hat{\bar{q}}}
    \begin{bmatrix}
    -\sin\hat{\phi}\sin\hat{\psi} & \cos\hat{\phi}\cos\hat{\psi} \\
    \cos\hat{\phi} & 0 \\
    -\sin\hat{\phi}\cos\hat{\psi} & -\cos\hat{\phi}\sin\hat{\psi}
    \end{bmatrix} &
    {^i}\hat{\bar{\mathbf{p}}}_1
    \end{bmatrix}
\end{align}
Every iteration we have the optimal inverse-depth correction, $\delta{\boldsymbol{\lambda}}^\ast$, and the estimate, $\hat{\boldsymbol{\lambda}}$, in the form of:
\begin{equation}
    \delta\boldsymbol{\lambda}^\ast = \left(\sum_{i\in\{\mathcal{C}\backslash{1}\}}\mathbf{H}_i^\top\boldsymbol{\Lambda}_i^{-1}\mathbf{H}_i\right)^{-1}\left(\sum_{i\in\{\mathcal{C}\backslash{1}\}}\mathbf{H}_i^\top\boldsymbol{\Lambda}_i^{-1}\boldsymbol{\epsilon}_i\right), \nonumber
\end{equation}
\begin{equation}
    \hat{\boldsymbol{\lambda}}\leftarrow\hat{\boldsymbol{\lambda}}+\delta{\boldsymbol{\lambda}}^\ast
\end{equation}
Once $\delta{\boldsymbol{\lambda}}^\ast$ gets converged (e.g., less than a threshold), we find the optimal inverse-depth estimate: $\boldsymbol{\lambda}^\ast = \hat{\boldsymbol{\lambda}}$.

\section*{Appendix B: Proof of Lemma 1}

Consider the case where the VINS estimation process is up to a scale factor, $s$ (that is, to recover the true state, $\mathbf{x}$, the underlying state, $\mathbf{x}'$, has to be``scaled up" metrically). This results in the following expressions of VINS states, in which the relative translation and landmark position with respect to $\{R_k\}$ can be written as (see \eqref{eq:x}):
\begin{align}
    {^{R_k}}\mathbf{p}_G &= s{^{R_k}}\mathbf{p}'_G \label{eq:spG} \\
    {^{R_k}}\mathbf{p}_I &= s{^{R_k}}\mathbf{p}'_I \label{eq:spI} \\
    {^{R_k}}\mathbf{p}_L &= s{^{R_k}}\mathbf{p}'_L \label{eq:spL}
\end{align}
where ${^{R_k}}\mathbf{p}'_G$, ${^{R_k}}\mathbf{p}'_I$ and ${^{R_k}}\mathbf{p}'_L$ are the values of underlying states. Note that the analysis presented in this proof holds true for any $t\in[t_k,t_{k+m}]$, hence we omit the time index for brevity of presentation. The scale change does not affect the rotation, as the scale $s$ corresponds to the translation only. Therefore, we have:
\begin{align}
    &\boldsymbol{\omega} = \boldsymbol{\omega}' \Rightarrow \nonumber \\
    &{^k_G}\mathbf{C}_{\bar{q}} = {^k_G}\mathbf{C}'_{\bar{q}}, \quad
    {^I_k}\mathbf{C}_{\bar{q}} = {^I_k}\mathbf{C}'_{\bar{q}}
    \label{eq:sqGI}
\end{align}
With those equations, the IMU velocity and acceleration can be obtained by taking the time derivative of \eqref{eq:spI}, as:
\begin{align}
    &{^I_k}\mathbf{C}_{\bar{q}}{^{R_k}}\mathbf{v}_I = s{^I_k}\mathbf{C}'_{\bar{q}}{^{R_k}}\mathbf{v}'_I \Rightarrow \nonumber \\
    &\mathbf{v}_I = s\mathbf{v}'_I, \quad
    {^I}\mathbf{a} = s{^I}\mathbf{a}'
    \label{eq:sva}
\end{align}
In particular, ${^{R_k}}\mathbf{g}$ is a state having known magnitude, thus is not affected by the scaling, i.e.,
\begin{equation}
    {^{R_k}}\mathbf{g} = {^{R_k}}\mathbf{g}'
    \label{eq:sg}
\end{equation}
Accordingly, the gravity effect to the IMU frame is estimated based on the local gravity, as:
\begin{align}
    &{^I_k}\mathbf{C}_{\bar{q}}{^{R_k}}\mathbf{g} = {^I_k}\mathbf{C}'_{\bar{q}}{^{R_k}}\mathbf{g} \Rightarrow \nonumber \\
    &{^I}\mathbf{g} = {^I}\mathbf{g}'
\end{align}
If such scale change is unobservable, then the measurements from the camera and IMU should remain the same. First, for the camera measurement of $L$ (see \eqref{eq:zl}), we have:
\begin{align}
    \begin{bmatrix}
    x \\ y \\ z
    \end{bmatrix} &=
    {^I}\mathbf{p}_L = {^I_k}\mathbf{C}_{\bar{q}}\big({^{R_k}}\mathbf{p}_L-{^{R_k}}\mathbf{p}_I\big) \nonumber \\
    &= s{^I_k}\mathbf{C}'_{\bar{q}}\big({^{R_k}}\mathbf{p}'_L-{^{R_k}}\mathbf{p}'_I\big) = s{^I}\mathbf{p}'_L = s
    \begin{bmatrix}
    x' \\ y' \\ z'
    \end{bmatrix} \Rightarrow \nonumber \\
    &\mathbf{z}_I = 
    \frac{1}{z}
    \begin{bmatrix}
    x \\ y
    \end{bmatrix} =
    \frac{1}{sz'}
    \begin{bmatrix}
    sx' \\ sy'
    \end{bmatrix} = 
    \frac{1}{z'}
    \begin{bmatrix}
    x' \\ y'
    \end{bmatrix} = 
    \mathbf{z}'_I
\end{align}
where the camera measurement does not change because the scale is invariant for perspective projection model. Then, for the IMU measurements we first examine the angular velocity measured by the gyroscope (see \eqref{eq:wm}), as:
\begin{align}
    &\boldsymbol{\omega}_m = \boldsymbol{\omega}+\mathbf{b}_g = \boldsymbol{\omega}'+\mathbf{b}'_g \Rightarrow \nonumber \\
    &\mathbf{b}_g = \mathbf{b}'_g
    \label{eq:sbg}
\end{align}
Similarly, for the linear acceleration measurements from the accelerometer (see \eqref{eq:am}), we have:
\begin{align}
    &\mathbf{a}_m = {^I}\mathbf{a}+{^I}\mathbf{g}+\mathbf{b}_a = {^I}\mathbf{a}'+{^I}\mathbf{g}'+\mathbf{b}'_a \Rightarrow \nonumber \\
    &\mathbf{b}_a = \mathbf{b}'_a-(s-1){^I}\mathbf{a}'
    \label{eq:sba}
\end{align}
Note that, $\mathbf{b}_a$ cannot be simply represented as the multiple of $\mathbf{b}'_a$, because it is a random walk process (see \eqref{eq:xdot}). Thus, based on \eqref{eq:spG}, \eqref{eq:spI}, \eqref{eq:spL}, \eqref{eq:sqGI}, \eqref{eq:sva}, \eqref{eq:sg}, \eqref{eq:sbg}, and \eqref{eq:sba}, it is not difficult to validate the corresponding error-state relation as shown in \eqref{eq:sx}.

\section*{Appendix C: Proof of Lemma 2}

Based on the observability matrix (see \eqref{eq:obsM}), the $\ell$-th block row, $\mathbf{M}'_\ell$, of observability matrix $\mathbf{M}'$ evaluating at ${^{R_k}}\hat{\mathbf{x}}'_\ell$ and ${^{R_k}}\hat{\mathbf{p}}'_L$, has the following structure:
\begin{equation}
    \mathbf{M}'_\ell = 
    \boldsymbol{\Pi}'
    \begin{bmatrix}
    \mathbf{0}_{3\times6} & \boldsymbol{\Gamma}'_1 & \boldsymbol{\Gamma}'_2 & -\mathbf{I}_3 & \boldsymbol{\Gamma}'_3 & \boldsymbol{\Gamma}'_4 & \boldsymbol{\Gamma}'_5 \;\; \big\lvert \;\; \mathbf{I}_3
    \end{bmatrix} \nonumber
\end{equation}
The direction of scale, $\mathbf{u}$, is unobservable (see \eqref{eq:sx}), if and only if $\mathbf{M}'_\ell\mathbf{u}=\mathbf{0}$, $\forall \ell \geq k$, thus we have:
\begin{equation}
    \boldsymbol{\Pi}'\big(
    -{^{R_k}}\hat{\mathbf{p}}'_{I_\ell}+\boldsymbol{\Gamma}'_3\hat{\mathbf{v}}'_{I_\ell}-\boldsymbol{\Gamma}'_5{^\ell}\hat{\mathbf{a}}'+{^{R_k}}\hat{\mathbf{p}}'_L
    \big)
    = \mathbf{0}
    \label{eq:M0}
\end{equation}
where
\begin{equation}
    \boldsymbol{\Pi}'\big({^{R_k}}\hat{\mathbf{p}}'_L-{^{R_k}}\hat{\mathbf{p}}'_{I_\ell}\big) = \mathbf{H}'_\text{p}{^{I_\ell}}\hat{\mathbf{p}}'_L \nonumber = \mathbf{0}
\end{equation}
because ${^{I_\ell}}\hat{\mathbf{p}}'_L$ is in the right nullspace of $\mathbf{H}'_\text{p}$ (see \eqref{eq:zl} and \eqref{eq:Jobs}). Then, what is left to show is:
\begin{equation}
    \boldsymbol{\Pi}'\big(\boldsymbol{\Gamma}'_3\hat{\mathbf{v}}'_{I_\ell}-\boldsymbol{\Gamma}'_5{^\ell}\hat{\mathbf{a}}'\big) = \mathbf{0}
    \label{eq:cond0}
\end{equation}
where
\begin{equation}
    \boldsymbol{\Gamma}'_3\hat{\mathbf{v}}'_{I_\ell}-\boldsymbol{\Gamma}'_5{^\ell}\hat{\mathbf{a}}'
    = -\Delta{t}_{k,\ell}\hat{\mathbf{v}}'_{I_\ell}-\int_{t_k}^{t_\ell}\int_{t_k}^{\tau}{^\mu_k}\mathbf{C}_{\hat{\bar{q}}}^\top\;{d\mu}{d\tau}{^\ell}\hat{\mathbf{a}}' \nonumber
\end{equation}
To this end, we examine two special cases: (i) if no rotations (i.e., $\boldsymbol{\omega}=\mathbf{0}$, $\forall \tau\in[t_k,t_\ell]$), then we have:
\begin{align}
    \boldsymbol{\Gamma}'_3\hat{\mathbf{v}}'_{I_\ell}-\boldsymbol{\Gamma}'_5{^\ell}\hat{\mathbf{a}}'
    &= -\Delta{t}_{k,\ell}\hat{\mathbf{v}}'_{I_\ell}-\int_{t_k}^{t_\ell}\int_{t_k}^{\tau}\mathbf{I}_3\;{d\mu}{d\tau}{^\ell}\hat{\mathbf{a}}' \nonumber \\
    &= -\Delta{t}_{k,\ell}\hat{\mathbf{v}}'_{I_\ell}-\frac{1}{2}\Delta{t}_{k,\ell}^2{^\ell}\hat{\mathbf{a}}'
    \label{eq:cond1}
\end{align}
and (ii) if constant local acceleration (i.e., ${^\tau}\mathbf{a}'\equiv{^k}\mathbf{a}'$, $\forall \tau\in[t_k,t_\ell]$), then we have:
\begin{align}
    \boldsymbol{\Gamma}'_3\hat{\mathbf{v}}'_{I_\ell}-\boldsymbol{\Gamma}'_5{^\ell}\hat{\mathbf{a}}' &= -\Delta{t}_{k,\ell}\hat{\mathbf{v}}'_{I_\ell}-\int_{t_k}^{t_\ell}\int_{t_k}^{\tau}{^\mu_k}\mathbf{C}_{\hat{\bar{q}}}^\top{^\mu}\hat{\mathbf{a}}'\;{d\mu}{d\tau} \nonumber \\
    &= -\Delta{t}_{k,\ell}\hat{\mathbf{v}}'_{I_\ell}-\int_{t_k}^{t_\ell}\int_{t_k}^{\tau}{^{R_k}}\hat{\mathbf{a}}'(\mu)\;{d\mu}{d\tau} \nonumber \\
    &= -\Delta{t}_{k,\ell}\hat{\mathbf{v}}'_{I_\ell}-\int_{t_k}^{t_\ell}\big({^{R_k}}\hat{\mathbf{v}}'_{I_\tau}-{^{R_k}}\hat{\mathbf{v}}'_{I_k}\big){d\tau} \nonumber \\
    &= -\Delta{t}_{k,\ell}\hat{\mathbf{v}}'_{I_\ell}-{^{R_k}}\hat{\mathbf{p}}'_{I_\ell}
    \label{eq:cond2}
\end{align}
To ensure that \eqref{eq:cond0} holds, both \eqref{eq:cond1} and \eqref{eq:cond2} should be equal to $\mathbf{0}$, and the conclusion of Lemma 2 is immediate.

\end{document}